\documentclass[twoside]{article}

\usepackage[accepted]{aistats2024}

\RequirePackage{natbib}

\setcitestyle{authoryear,round,citesep={;},aysep={,},yysep={;}}

\usepackage{xcolor}
\definecolor{mydarkblue}{RGB}{0, 0, 139}

\usepackage[colorlinks=true,
    linkcolor=mydarkblue,
    citecolor=mydarkblue,
    filecolor=mydarkblue,
    urlcolor=mydarkblue]{hyperref}

\usepackage{url}

\usepackage{amsmath}
\usepackage{amsthm}
\usepackage{amsbsy}
\usepackage{amssymb} 

\usepackage{nicefrac}
\usepackage{mathtools}

%

\usepackage{tikz}
\usepackage[framemethod=TikZ]{mdframed}
\usepackage{thmtools}

\definecolor{shadecolor}{rgb}{0.92, 0.94, 0.96}
\declaretheoremstyle[
headfont=\normalfont\bfseries,
notefont=\mdseries, notebraces={(}{)},
bodyfont=\normalfont,
postheadspace=0.5em,
spaceabove=6pt,
mdframed={
  skipabove=8pt,
  skipbelow=8pt,
  hidealllines=true,
  backgroundcolor={shadecolor},
  innerleftmargin=4pt,
  innerrightmargin=4pt,
  roundcorner=3pt}
]{shaded}

\declaretheorem[style=shaded,within=section]{definition}
\declaretheorem[style=shaded,sibling=definition]{theorem}
\declaretheorem[style=shaded,sibling=definition]{proposition}

\declaretheorem[style=shaded,sibling=definition]{corollary}

\declaretheorem[style=shaded,sibling=definition]{lemma}

\declaretheorem[style=shaded,sibling=definition]{example}



\usepackage{xcolor}
\usepackage{color}

\usepackage{hyperref}
\usepackage{textcomp}
\usepackage{cases}  



\newcommand{\R}{\mathbb{R}} 
\newcommand{\N}{\mathbb{N}} 





\setlength {\marginparwidth }{1.8cm}
\usepackage[colorinlistoftodos,bordercolor=orange,backgroundcolor=orange!20,linecolor=orange]{todonotes}


\newcommand{\defeq}{\coloneqq} 

\newcommand{\st}{\;|\;} 
\newcommand{\norm}[1]{ \| #1 \|}      




\DeclareMathOperator{\dom}{dom}         
\DeclareMathOperator{\argminin}{argmin}  
  
\newcommand{\argmin}[1]{\underset{#1}{\argminin}}

\DeclareMathOperator{\prox}{prox}       



\DeclareMathOperator{\conv}{conv}       





\usepackage{lineno}
\usepackage[utf8]{inputenc} 
\usepackage[T1]{fontenc}    
\usepackage{url}            
\usepackage{booktabs}       


\usepackage{enumitem}
\usepackage{appendix}
\usepackage{wrapfig}
\usepackage{xspace}
\usepackage{enumitem}
\usepackage{tabularx}

\usepackage{algpseudocode}
\usepackage{algorithm}

\usepackage[nameinlink, capitalize]{cleveref}

\usepackage{tikz}
\usepackage{standalone}
\usetikzlibrary{shapes,arrows,fit,positioning,calc}
\usepackage{calc}
\usepackage{longtable}

\usepackage[most]{tcolorbox}
\usepackage{graphicx,subfigure}
\usepackage{caption} 
\captionsetup[table]{skip=5pt}


\raggedbottom

\newcommand{\avg}{\operatorname{avg}}

\newcommand{\eg}[0]{\emph{e.g.},~}
\newcommand{\ie}[0]{\emph{i.e.},~}

\newcommand{\mat}[1]{\ensuremath{\MakeUppercase{\mathbf{#1}}}}

\newcommand{\abs}[1]{\left| #1 \right|}
\newcommand{\floor}[1]{\lfloor #1 \rfloor}

\renewcommand{\R}{\mathcal{R}}

\renewcommand{\Re}{\mathbb{R}}

\newcommand{\pisort}{\Pi_\mathrm{sort}}




\newcommand{\dfn}[1]{\textit{#1}}
\newcommand{\alg}[1]{\textsc{#1}}

\newcommand{\ica}{\alg{ImminentCollisions}\xspace}
\newcommand{\eca}{\alg{EndCollisions}\xspace}
\newcommand{\sca}{\alg{SearchCollisions}\xspace}

\newcommand{\indic}[1]{\ensuremath{\mathbf{1}_{[#1]}}}

\newlength{\DepthReference}
\settodepth{\DepthReference}{h}

\newlength{\HeightReference}
\settoheight{\HeightReference}{a}

\newlength{\Width}%

\newcommand{\MyColorBox}[2]%
{%
    \settowidth{\Width}{#2}%
    \colorbox{#1}%
    {%
        \raisebox{-\DepthReference}%
        {%
                \parbox[b][\HeightReference+\DepthReference][c]{\Width}{\centering#2}%
        }%
    }%
}

\definecolor{grayishyellow}{rgb}{0.96, 0.95, 0.88}
\newcommand{\hl}[1]{#1}

\usepackage{xpatch}


\begin{document}

\algblock{ParDo}{End}
\algblock{ParFor}{EndFor}
\algblock{ParForAll}{EndFor}

\algnewcommand\algorithmicparfor{\textbf{for}}
\algnewcommand\algorithmicparforall{\textbf{for all}}
\algnewcommand\algorithmicpardo{\textbf{pardo}}
\algrenewtext{ParDo}{\algorithmicpardo}
\algrenewtext{End}{\algorithmicend}
\algrenewtext{ParFor}[1]{\algorithmicparfor\ #1\ \algorithmicpardo}
\algrenewtext{ParForAll}[1]{\algorithmicparforall\ #1\ \algorithmicpardo}

\algrenewcommand\algorithmicindent{0.8em}%

\runningauthor{Mehran Shakerinava$^\ast$, Motahareh Sohrabi$^\ast$, Siamak Ravanbakhsh, Simon Lacoste-Julien}

\twocolumn[

\aistatstitle{Weight-Sharing Regularization}

\aistatsauthor{ Mehran Shakerinava$^{*,1,2}$ \And Motahareh Sohrabi$^{*,2,3}$ }\aistatsauthor{  Siamak Ravanbakhsh$^{1,2,4}$ \And Simon Lacoste-Julien$^{2,3,4}$ }

\aistatsaddress{ $^1$McGill University \And  $^2$Mila - Quebec AI Institute \And $^3$Université de Montréal \And $^4$CIFAR AI Chair} ]

\begin{abstract}
    Weight-sharing is ubiquitous in deep learning. Motivated by this, we propose a ``weight-sharing regularization'' penalty on the weights $w \in \Re^d$ of a neural network, defined as $\mathcal{R}(w) = \frac{1}{d - 1}\sum_{i > j}^d |w_i - w_j|$. We study the proximal mapping of $\mathcal{R}$ and provide an intuitive interpretation of it in terms of a physical system of interacting particles. We also parallelize existing algorithms for $\prox_\R$ (to run on GPU) and find that one of them is fast in practice but slow ($O(d)$) for worst-case inputs. Using the physical interpretation, we design a novel parallel algorithm which runs in $O(\log^3 d)$ when sufficient processors are available, thus guaranteeing fast training. Our experiments reveal that weight-sharing regularization enables fully connected networks to learn convolution-like filters even when pixels have been shuffled while convolutional neural networks fail in this setting. Our code is available on \href{https://github.com/motahareh-sohrabi/weight-sharing-regularization}{github}.
\end{abstract}

\section{INTRODUCTION}
All modern deep learning architectures, from Convolutional Neural Networks (CNNs)~\citep{lecun1989backpropagation} to transformers~\citep{vaswani2017attention}, use some form of \textit{weight-sharing}, \eg CNNs apply the same weights at different locations of the input image. More generally, requiring a linear function to be symmetric w.r.t. a permutation group induces weight-sharing in its corresponding matrix form~\citep{shawe1989building,ravanbakhsh2017equivariance}, \eg symmetry w.r.t. the group of cyclic permutations $C_n$ produces a circulant matrix, resulting in circular convolution.

\begin{figure}[t]
    \centering    \includegraphics[width=0.45\linewidth]{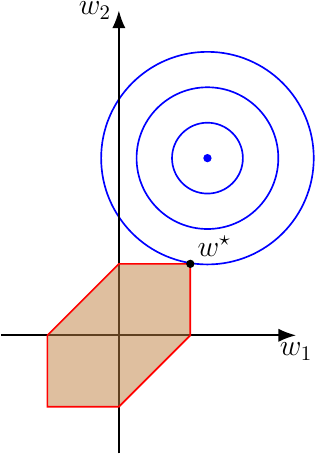}
    \caption{Depiction of the unregularized error function's contour lines (in blue), together with the constraint area for $\R + \ell_1$, where the optimal parameter vector $w$ is marked by $w^\star$. Notice that the weights are set to be equal (shared) in the solution.}
    \label{fig:regularization}
\end{figure}

fully connected networks with no structure or weight-sharing are prone to overfitting the dataset. A common technique used in machine learning to avoid overfitting and improve generalization is regularization~\citep{hastie2001elements}. Motivated by these facts, we propose \textit{weight-sharing regularization}. For a weight vector $w \in \Re^d$, the weight-sharing regularization penalty is defined as
\begin{equation}\label{eq:reg}
    \R(w) = \frac{1}{d - 1}\sum_{i > j}^d \abs{w_i - w_j}.
\end{equation}

Our goal is to train deep neural networks regularized with $\R$ (See \cref{fig:regularization}). By analogy to $\ell_1$ regularization where using a stochastic subgradient method (SGD) performs poorly for compression as it does not yield sufficiently small weights~\citep{spred}, we propose to use instead (stochastic) proximal gradient methods~\citep{atchade2017perturbed} which yield exactly tied weights. The efficient computation of the proximal update will be a central contribution of this paper.

Given a task-specific loss function $\mathcal{L}(w)$ we may construct a weight-sharing and $\ell_1$ regularized loss with coefficients $\alpha$ and $\beta$ as
\begin{equation}
    l(w) = \mathcal{L}(w) + \alpha \R(w) + \beta \ell_1(w).
\end{equation}
The proximal gradient descent update for $l$ with step-size $\eta$ is then given by
\begin{equation}
    w^{(t+1)} = \prox_{\eta (\alpha \R + \beta \ell_1)} \left( w^{(t)} - \eta \nabla \mathcal{L}(w^{(t)}) \right),
\end{equation}
where the proximal mapping is defined as
\begin{equation}
    \prox_f(x) = \argmin{u} \left( f(u)+\frac{1}{2}\norm{u-x}_2^2 \right).
\end{equation}

It follows from \citet[Corollary 4]{yu2013decomposing} that
\begin{equation}
    \prox_{\alpha\R + \beta \ell_1}(x) = \prox_{\beta \ell_1}\left(\prox_{\alpha\R}(x)\right).\footnote{See \cref{sec:prox_R_l1} for an alternative proof.}
\end{equation}

Therefore, the main problem lies in computing $\prox_\R$.

\subsection{Contributions}

In this work, we make the following contributions:

\begin{itemize}
    \item We provide a new formulation of the proximal mapping of $\R$ in terms of the solution to an Ordinary Differential Equation (ODE). We give an intuitive interpretation of this ODE in terms of a physical system of interacting particles. This interpretation helps provide an intuitive understanding of algorithms for $\prox_\R$.

    \item We analyze parallel versions of existing algorithms for $\prox_\R$ and show that, in the worst case, they provide no speedup compared to the best sequential algorithm. Based on our physical system analogy, we propose a novel parallel algorithm with a depth of $O(\log^3 d)$ that provides an exponential speedup for worst-case inputs.    

    \item In experiments, for the first time we are able to effectively apply weight-sharing regularization at scale in the context of deep neural networks and recover convolution-like filters in Multi-Layer Perceptrons (MLPs), even when pixels have been shuffled.
\end{itemize}

\subsection{Related Works}\label{sec:related_works}

\textbf{Clustered lasso.} Regularization with $\R$ and $\ell_1$ has been studied in a linear regression setting and is known as \textit{clustered lasso}~\citep{She2010sparse}. Clustered lasso is an instance of \textit{generalized lasso}~\citep{generalized_lasso}, where the regularization term is specified via a matrix $\mat{D}$ as $\norm{\mat{D} w}_1$. In clustered lasso, $\mat{D}$ is a tall matrix with $d + {d \choose 2}$ rows, where $d$ rows form a diagonal matrix for $\ell_1$, and each of the rest encodes the subtraction of a unique pair of weights for $\R$.

The proximal mapping of generalized lasso takes a simple form in the dual space and algorithms have been developed to exploit this fact~\citep{arnold2016efficient}. However, the dimensionality of the dual space is the same as the number of rows in $\mat{D}$, which makes dual methods infeasible for clustered lasso when $d$ is large. We avoid this problem in this work by staying in the primal space.

\textbf{Isotonic regression.} \citet{lin2019efficient} show that the proximal mapping of $\R$ can be obtained via the isotonic regression problem
\begin{equation}\label{eq:iso_regr}
    \min_{x \in \Re^d} \norm{y - x}_2^2 \quad \text{subject to} \quad x_1 \leq x_2 \leq ... \leq x_n.
\end{equation}
One can therefore use algorithms designed for isotonic regression~\citep{best1990active} to compute the proximal mapping of clustered lasso in $O(d \log d)$ time. Moreover, based on this connection, our proposed algorithm can also be used as a fast parallel solver for isotonic regression. In this work, however, there will be no need to understand isotonic regression. Our physical interpretation provides a complete alternative specification of the problem that is much more intuitive.

\citet{kearsley1996approach} propose a parallel algorithm for isotonic regression which we show to be \textit{incorrect} with a simple counterexample, included in \cref{app:counterexample}. 

\textbf{Symmetry.} Our work is also related to a body of work in invariant and equivariant deep learning with finite symmetry groups, as well as prior work on regularization techniques and network compression. 
In particular, using (soft) weight-sharing for regularization is motivated from two perspectives: symmetry, and minimum description length.

As shown in~\cite{shawe1989building,ravanbakhsh2017equivariance}, requiring a linear function to respect a permutation symmetry induces weight-sharing in its corresponding matrix form. The same idea can be expressed using a generalization of convolution to symmetry groups~\citep{cohen2016group}. Identification of weight-sharing patterns can be used for discovering symmetries of the data or task. \citet{zhou2020meta,yeh2022equivariance} pursue this goal in a meta-learning setting and using bilevel optimization, respectively. 

\textbf{Compression.} However, not all weight-sharing patterns correspond to a symmetry transformation.\footnote{In technical terms, a weight-sharing pattern corresponds to equivariance with respect to a permutation group, if and only if the parameters corresponding to the same orbit due to the action of the group on the rows and columns of the matrix are tied together.} 
Minimum Description Length (MDL) provides a more general motivation. According to MDL, the best model is the one that has the shortest description of the model itself plus its prediction errors~\citep{rissanen2007information}. Assuming a prior over the weights so as to achieve the best compression, MDL leads to regularization. 
In particular, the soft weight-sharing of~\citet{hinton1993keeping} is derived from this perspective and motivates follow-up works on network compression~\citep{ullrich2017soft}. While these works assume a Gaussian mixture prior for the weights, leading to $\ell_2$ regularization, we use sparsity inducing $\ell_1$ regularization to encourage exact weight-sharing. 
Indeed, assuming $\ell_0$ in \cref{eq:reg}, the objective is similar to vector quantization for compression; instead of using  $d c$ bits, where $c$ is the number of bits-per-weight, using exact weight-sharing with $k$ distinct weights, one requires only $kc + d \log k$ bits, leading to significant compression for small $k$.
This observation motivates the regularization of \cref{eq:reg} from the MDL point of view.

\subsection{Outline}

The required background on optimization and parallel computation for this paper is covered in \cref{sec:background}. We begin in \cref{sec:subdiff_R} by studying the subdifferential of $\R$. Next, in \cref{sec:prox_R}, we show that the proximal mapping can be obtained by simulating a physical system of interacting particles. In \cref{sec:algorithms}, we study and analyze parallel algorithms for computing the proximal mapping of $\R$. We introduce a novel low-depth parallel algorithm while showing that previously existing algorithms have high depth. In \cref{sec:rewinding}, we describe a method for controlling the undesirable bias of $\ell_1$-relaxation. In \cref{sec:experiments}, we apply our algorithm to training weight-sharing regularized deep neural networks with proximal gradient descent on GPU. We discuss limitations and directions for future work in \cref{sec:future_work} and conclude in \cref{sec:conclusion}.

\section{SUBDIFFERENTIAL OF $\R$}\label{sec:subdiff_R}

Obtaining the subdifferential of $\R$ will be essential to deriving its proximal mapping. We will do this by writing $\R$ as a point-wise maximum of linear functions. We will refer to each element of the weight vector $w$ as a weight.

The function \textit{$\R$ is a convex and piece-wise linear function}. It will be useful to keep in mind that an absolute value can be written as a $\max$, \eg $|x| = \max(x, -x)$.
Each piece of $\R$, when extended, forms a tangent hyperplane. Depending on the sign that we choose for each absolute value term of $\R$, we get the equation of one of these hyperplanes. The correct signs are determined by the ordering of weights. More specifically, there is a one-to-one correspondence between hyperplanes and the ordering of the weights. It follows that $\R$ has $d!$ pieces.

For each assumed increasing ordering $\pi \in S_d$\footnote{$S_d$ is the group of all permutations of $d$ objects.}, the corresponding hyperplane is given by
\begin{equation}\label{eq:hyperplane}
    h_\pi(w) = \frac{1}{d - 1}\sum_{i=1}^d (2i - d - 1) w_{\pi_i}.
\end{equation}

The hyperplane equation is derived by noting that the element $w_{\pi_i}$ appears with positive sign when compared to the $i - 1$ smaller weights $w_{\pi_1}, ..., w_{\pi_{i-1}}$ and with negative sign when compared to the $d - i$ larger weights $w_{\pi_{i + 1}}, ..., w_{\pi_d}$. We thus have an equation for each piece of $\R$.

The convexity of $\R$ implies that the tangent hyperplanes are below the graph of $\R$. Hence, $\R$ is the point-wise maximum of all hyperplanes. Writing $\R$ in this alternate way gives the equation below.
\begin{equation}\label{eq:r_alt}
    \R(w) = \max_{\pi \in S_d} h_\pi(w)
\end{equation}

\begin{example}
    Consider $d = 3$. Then,
    \begin{equation*}
        \R(w) = \frac{1}{2}\left(|w_1 - w_2| + |w_2 - w_3| + |w_1 - w_3|\right).
    \end{equation*}
    There are $3! = 6$ possible orderings of the weights which result in $6$ possible sign combinations for the terms inside the absolute values and $6$ hyperplanes. These are listed below.

    \begin{center}
    \begin{tabular}{|c|c|c|}
    \hline
    Ordering & Signs & Hyperplane \\
    \hline
    $w_1 < w_2 < w_3$ &  $- - -$ & $w_3 - w_1$ \\
    \hline
    $w_1 < w_3 < w_2$ & $- + +$ & $w_2 - w_3$ \\
    \hline
    $w_2 < w_1 < w_3$ & $+ - +$ & $w_1 - w_2$ \\
    \hline
    $w_2 < w_3 < w_1$ & $+ - -$ & $w_3 - w_2$ \\
    \hline
    $w_3 < w_1 < w_2$ & $- + -$ & $w_2 - w_1$ \\
    \hline
    $w_3 < w_2 < w_1$ & $+ + +$ & $w_1 - w_3$ \\
    \hline
    \end{tabular}
    \end{center}
    Note that some sign combinations are impossible, \eg $+ + -$, which would imply $w_1 > w_2$, $w_2 > w_3$, $w_3 > w_1$, which is impossible.

    By taking the point-wise maximum of the hyperplanes we can write $\R$ in the following equivalent form.
    \begin{equation*}
    \begin{split}
            \R(w) = \max \Big\{ & w_1 - w_3, w_1 - w_2, w_2 - w_3, \\
                           & w_2 - w_1, w_3 - w_2, w_3 - w_1 \Big\}
    \end{split}
    \end{equation*}

\end{example}

It is easy to see that the permutations that maximize $h_\pi(w)$ are the ones that sort $w$. 
When some weights are equal (\ie there is weight-sharing) there are multiple permutations that sort the weights. Let $\pisort(w)$ be the set of permutations that sort $w$, then, for all $\pi \in \pisort(w)$, we have $\R(w) = h_{\pi}(w)$. We refer to these $h_\pi$ as the \dfn{active hyperplanes} at $w$.

Using this new equation for $\R$, we can write its subdifferential as the convex hull of the gradient of active hyperplanes at $w$; See \cref{prop:subdiff_max}:
\begin{equation}\label{eq:subdiff_R}
    \partial \R(w) = \conv \bigcup_{\pi \in \pisort(w)} \nabla h_\pi.
\end{equation}

An intuitive description of the subgradients of $\R$ is that, for weights that are equal, you may assume any ordering and calculate a gradient. You may also take any convex combination of these gradients.

\section{PROXIMAL MAPPING OF $\R$}\label{sec:prox_R}

We describe a dynamical system for $w$, specifically, an ODE, such that following it for one time unit produces the proximal map of $\R$ at the initialization point of $w$, \ie $w(0)$. We construct the ODE such that $w$ follows the average of the negative gradient of the active hyperplanes. The ODE is given by
\begin{equation}\label{eq:prox_ode}
    \dot{w}(t) \defeq \frac{dw}{dt}(t) = \frac{1}{|\pisort(w(t))|} \sum_{\pi \in \pisort(w(t))} -\nabla h_\pi.
\end{equation}

Note that $-\dot{w}(t)$ is a subgradient of $\R$ at $w(t)$.

\begin{lemma}[Weight-sharing]\label{lemma:ws_preservation}
    The ODE preserves weight-sharing. Formally, for all $t$,
    \begin{equation*}
        w_i(t) = w_j(t) \implies \dot{w}_i(t) = \dot{w}_j(t).
    \end{equation*}
\end{lemma}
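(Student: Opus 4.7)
The plan is to exploit a symmetry of $\pisort(w(t))$: when $w_i(t) = w_j(t)$, swapping the roles of $i$ and $j$ must leave the set of sorting permutations invariant, and this invariance forces the averaged gradient to have equal $i$-th and $j$-th entries.

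First I would unpack $\nabla h_\pi$ explicitly. From \cref{eq:hyperplane}, $h_\pi(w) = \tfrac{1}{d-1}\sum_{i=1}^d (2i-d-1)w_{\pi_i}$, so the $k$-th coordinate of $\nabla h_\pi$ is $(\nabla h_\pi)_k = \tfrac{2\pi^{-1}(k) - d - 1}{d-1}$, where $\pi^{-1}(k)$ is the sorted position of coordinate $k$ under $\pi$. Substituting into \cref{eq:prox_ode}, the claim reduces to showing that when $w_i(t)=w_j(t)$,
\begin{equation*}
    \sum_{\pi \in \pisort(w(t))} \pi^{-1}(i) \;=\; \sum_{\pi \in \pisort(w(t))} \pi^{-1}(j).
\end{equation*}

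Next I would introduce the transposition $\tau = (i\, j) \in S_d$ and consider the map $\pi \mapsto \tau \circ \pi$ acting on sorting permutations. For any $\pi \in \pisort(w(t))$, the sequence $w_{(\tau\circ\pi)_1}, \ldots, w_{(\tau\circ\pi)_d}$ differs from $w_{\pi_1}, \ldots, w_{\pi_d}$ only in that positions whose value is $w_i$ or $w_j$ are swapped; since $w_i(t)=w_j(t)$, this leaves the sequence unchanged and still monotone, so $\tau\circ\pi \in \pisort(w(t))$. Because $\tau$ is an involution, the map $\pi \mapsto \tau\circ\pi$ is a bijection of $\pisort(w(t))$ onto itself.

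Finally I would apply the change of variables $\pi = \tau \circ \pi'$ to the first sum above: $\pi^{-1}(i) = (\tau\circ\pi')^{-1}(i) = (\pi')^{-1}(\tau^{-1}(i)) = (\pi')^{-1}(j)$, so the sum equals $\sum_{\pi' \in \pisort(w(t))} (\pi')^{-1}(j)$, which is exactly the second sum. This yields $\dot w_i(t) = \dot w_j(t)$.

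The argument is really just a symmetry/change-of-variables, and there is no significant obstacle. The only detail worth verifying carefully is that left-composition by $\tau$ (rather than right-composition) is the action that swaps the roles of the original indices $i$ and $j$ while keeping $\pisort(w(t))$ invariant, which follows from $(\tau\circ\pi)_a = \tau(\pi_a)$.
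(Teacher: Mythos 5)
Your proof is correct and follows essentially the same route as the paper's: both arguments left-compose the sorting permutations with the transposition $\tau=(i\,j)$, observe that this is a bijection of $\pisort(w(t))$ onto itself when $w_i(t)=w_j(t)$, and conclude by a change of variables that the $i$-th and $j$-th coordinates of the averaged gradient agree. You simply spell out more explicitly the formula $(\nabla h_\pi)_k = \tfrac{2\pi^{-1}(k)-d-1}{d-1}$ and the verification that left-composition is the correct action, which the paper leaves implicit.
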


\begin{lemma}[Monotonic inclusion]\label{lemma:monotonic_inclusion}
    The ODE satisfies monotonic inclusion for the sets $\pisort$ and $\partial \R$. Specifically, for all $t_2 > t_1$:
    \begin{enumerate}[label=\textbf{\arabic*}.]
        \item $\pisort(w(t_1)) \subseteq \pisort(w(t_2))$
        \item $\partial \R(w(t_1)) \subseteq \partial \R(w(t_2))$
    \end{enumerate}
\end{lemma}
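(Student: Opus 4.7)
The plan is to first establish a pairwise order-preservation property for the ODE, from which both claims follow almost immediately. Specifically, I will show that for any indices $i, j \in \{1, \ldots, d\}$ and any $t_2 > t_1$, if $w_i(t_1) \leq w_j(t_1)$, then $w_i(t_2) \leq w_j(t_2)$. With this in hand, Claim 1 reduces to noting that $\pi \in \pisort(w)$ is determined by the $d-1$ pairwise inequalities $w_{\pi_k} \leq w_{\pi_{k+1}}$, all of which persist in time; Claim 2 then follows by substituting Claim 1 into the subdifferential formula in \cref{eq:subdiff_R}, because enlarging $\pisort(w)$ enlarges the union of active gradients and hence its convex hull.

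To prove the pairwise property, I would analyze the gap $g_{ij}(t) := w_j(t) - w_i(t)$ along the trajectory in two regimes. (i) While $w_i(t) < w_j(t)$ strictly, every $\pi \in \pisort(w(t))$ places $i$ before $j$ in the sort order. Reading off the coefficients from \cref{eq:hyperplane}, one gets $[-\nabla h_\pi]_i - [-\nabla h_\pi]_j = \frac{2(\pi^{-1}(j) - \pi^{-1}(i))}{d-1} \geq \frac{2}{d-1} > 0$ for every such $\pi$. Averaging over $\pisort(w(t))$ as in \cref{eq:prox_ode} gives $\dot{g}_{ij}(t) \leq -\frac{2}{d-1} < 0$, so the gap strictly shrinks whenever it is positive. (ii) At any instant $t_0$ where $g_{ij}(t_0) = 0$, \cref{lemma:ws_preservation} yields $\dot{w}_i(t_0) = \dot{w}_j(t_0)$, i.e., $\dot{g}_{ij}(t_0) = 0$. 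Taken together, $g_{ij}$ is nonincreasing while positive and pinned to zero once it touches zero, so it cannot become negative on $[t_1, t_2]$.

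The main obstacle is the second regime: showing that once two weights coincide they stay coincident throughout $[t_0, t_2]$, not just at the instant $t_0$. \cref{lemma:ws_preservation} is a pointwise statement, so on its own it only controls $\dot{g}_{ij}$ at an isolated tie instant. A clean way to upgrade it is to identify tied coordinates and consider the induced flow on the quotient state space where tied weights are represented by a single variable; the velocity formula in \cref{eq:prox_ode} is consistent with this reduction (tied weights always receive equal velocity), so uniqueness of the reduced ODE forces the tie to persist. Any technical care needed to handle the piecewise-defined right-hand side of \cref{eq:prox_ode} (e.g., a Carathéodory/Filippov interpretation) enters at exactly this step; once the tie-persistence is granted, the rest of the argument is purely combinatorial and relies only on \cref{eq:subdiff_R} and the pairwise monotonicity established above.
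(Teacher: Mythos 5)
Your proposal is correct and follows essentially the same route as the paper's proof: order preservation of the flow (ties, once formed, persist, so strict inequalities can only turn into equalities) gives monotonic growth of $\pisort$, and \cref{eq:subdiff_R} then transfers this to $\partial\R$. You are more explicit than the paper on two points---the quantitative computation showing that strictly separated weights drift toward each other, and the observation that \cref{lemma:ws_preservation} is only a pointwise statement so that tie persistence needs a separate well-posedness argument for the piecewise-defined right-hand side---whereas the paper's proof simply asserts that weights which meet ``stay equal forever.''
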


\begin{theorem}[Proximal mapping of $\R$]\label{thm:prox_R}
When $w$ follows the ODE,
\begin{equation*}w(1) = \prox_\R(w(0)).\end{equation*}
\end{theorem}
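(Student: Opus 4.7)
The plan is to exploit the standard first-order optimality characterization of the proximal mapping: since $\R$ is convex and $u \mapsto \R(u) + \tfrac{1}{2}\|u - x\|^2$ is strictly convex, one has $u = \prox_\R(x)$ if and only if $x - u \in \partial \R(u)$. Setting $x = w(0)$ and $u = w(1)$, the theorem reduces to showing
\begin{equation*}
w(0) - w(1) \in \partial \R(w(1)).
\end{equation*}

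First I would rewrite the left-hand side as an integral,
\begin{equation*}
w(0) - w(1) = -\int_0^1 \dot{w}(t)\, dt = \int_0^1 \bigl(-\dot{w}(t)\bigr)\, dt.
\end{equation*}
The ODE in \cref{eq:prox_ode} defines $-\dot{w}(t)$ as the average of $\nabla h_\pi$ over $\pi \in \pisort(w(t))$, so by \cref{eq:subdiff_R} (subdifferential as the convex hull of active-hyperplane gradients), $-\dot{w}(t) \in \partial \R(w(t))$ for every $t \in [0,1]$. Next, I would apply the second part of \cref{lemma:monotonic_inclusion} (monotonic inclusion of subdifferentials) to conclude that in fact
\begin{equation*}
-\dot{w}(t) \in \partial \R(w(t)) \subseteq \partial \R(w(1)) \qquad \text{for all } t \in [0,1].
\end{equation*}
So the integrand lies in a single closed convex set, namely $\partial \R(w(1))$.

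The remaining step is to transfer this pointwise inclusion to the integral. Since $\partial \R(w(1))$ is closed and convex, and the integral $\int_0^1 (-\dot{w}(t))\, dt$ is the limit of Riemann sums of the form $\sum_i (-\dot{w}(t_i))(t_{i+1}-t_i)$ whose weights $t_{i+1}-t_i$ are nonnegative and sum to one, each such sum is a convex combination of points in $\partial \R(w(1))$ and hence lies in $\partial \R(w(1))$; passing to the limit keeps us in the closed set. This yields $w(0) - w(1) \in \partial \R(w(1))$, establishing the theorem.

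The main obstacle, in my view, is not any individual algebraic step but rather justifying that the ODE is well-posed: its right-hand side is discontinuous whenever the sort order of $w$ changes, so existence and uniqueness of $w(\cdot)$ need the structural consequences encoded in \cref{lemma:ws_preservation} and \cref{lemma:monotonic_inclusion} — once two coordinates become equal they move together, so trajectories can only pass from cells of higher symmetry to cells of still higher symmetry, never breaking apart. Assuming the solution exists as a piecewise-linear curve whose active set $\pisort(w(t))$ is monotonically increasing (which is exactly what those lemmas deliver), the integral argument above goes through cleanly, and strict convexity of the proximal objective immediately upgrades the subgradient inclusion to the equality $w(1) = \prox_\R(w(0))$.
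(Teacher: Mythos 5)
Your proof is correct and follows essentially the same route as the paper: both reduce the claim to the optimality condition $w(0) - w(1) \in \partial\R(w(1))$ of \cref{eq:prox_opt}, use \cref{lemma:monotonic_inclusion} to place every velocity $-\dot{w}(t)$ in $\partial\R(w(1))$, and conclude by convexity of the subdifferential that the total displacement (a convex combination of these subgradients) lies there too. The paper phrases the displacement as a finite sum $\sum_i t_i g_i$ over the finitely many linear pieces of the trajectory rather than as an integral, but this is the same argument; your added remarks on well-posedness and closedness under limits of Riemann sums only make the reasoning more careful, not different.
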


We will not be solving the ODE numerically. Instead, we will exploit certain properties of the ODE to obtain algorithms for exact computation of $w(1)$. These properties become clearer when we interpret $(w, \dot{w})$ as the state of a physical system of particles.

Think of each weight as a \textit{sticky} particle with unit mass moving in a 1-dimensional space. Furthermore, this system respects conservation of mass and momentum and Newton's laws of motion.

\begin{proposition}[Conservation of Momentum]\label{prop:conservation_of_momentum}
    Momentum is conserved in the ODE and is equal to $0$. More precisely,
    \begin{equation*}
        \sum_{i=1}^d \dot{w}_i = 0.   
    \end{equation*}
\end{proposition}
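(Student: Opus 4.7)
The plan is to verify the identity by direct computation: show that each individual hyperplane gradient $\nabla h_\pi$ has coordinates summing to zero, whence any convex combination (in particular, the uniform average appearing in the ODE) does too.

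First I would recall from \cref{eq:hyperplane} that
\[
    h_\pi(w) = \frac{1}{d-1}\sum_{i=1}^d (2i - d - 1)\, w_{\pi_i},
\]
so $\nabla h_\pi$ has a single nonzero contribution per coordinate: specifically, $(\nabla h_\pi)_{\pi_i} = (2i - d - 1)/(d-1)$. Summing over all $d$ coordinates amounts to reindexing by $i$ (since $\pi$ is a bijection on $\{1,\ldots,d\}$), giving
\[
    \sum_{j=1}^d (\nabla h_\pi)_j \;=\; \frac{1}{d-1}\sum_{i=1}^d (2i - d - 1) \;=\; \frac{1}{d-1}\Bigl(d(d+1) - d(d+1)\Bigr) \;=\; 0.
\]
Intuitively, the coefficients $2i - d - 1$ are antisymmetric about the midpoint $i = (d+1)/2$, so they cancel in pairs.

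Having established this for each $\pi$, I would conclude by taking the average indicated in the definition of the ODE (\cref{eq:prox_ode}): for any $t$,
\[
    \sum_{j=1}^d \dot{w}_j(t) \;=\; -\frac{1}{|\pisort(w(t))|}\sum_{\pi \in \pisort(w(t))} \sum_{j=1}^d (\nabla h_\pi)_j \;=\; 0,
\]
which is the claimed conservation of momentum.

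There is no real obstacle here; the proof is a one-line arithmetic fact about the arithmetic progression $\{2i - d - 1 : i = 1,\ldots,d\}$, lifted through the ODE definition. If one wanted a more conceptual justification, one could instead note that $\R$ is invariant under the translation $w \mapsto w + c \mathbf{1}$, so every subgradient of $\R$ (and in particular $-\dot{w}(t)$, which \cref{eq:subdiff_R} identifies as a subgradient) must be orthogonal to $\mathbf{1}$; this is just a rephrasing of the same computation.
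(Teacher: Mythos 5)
Your proposal is correct and takes essentially the same route as the paper's proof: both observe that the coefficients of each tangent hyperplane satisfy $\sum_{i=1}^d (2i - d - 1) = 0$, so every $\nabla h_\pi$ has coordinates summing to zero, and then note that $-\dot{w}$ is a convex combination (here, the uniform average) of such gradients. Your closing remark about invariance of $\R$ under $w \mapsto w + c\mathbf{1}$ is a pleasant conceptual gloss but not needed.
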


When two particles collide, they stick, due to \cref{lemma:ws_preservation}, and their masses and momenta are added. There are no other forces involved. These laws allow us to predict the future of the system, given the positions, masses, and velocities of all particles. They can therefore act as an intuitive replacement for the ODE.

If we initialize particle positions with the elements of $w$ and velocities with the elements of $-\nabla \R(w)$ (assuming all weights are distinct), then, the particle positions at time $t=1$ give us the proximal mapping of $\R$ at $w$, due to \cref{thm:prox_R}. To calculate the positions at time $t=1$ we need to identify collisions from $t=0$ to $t=1$.

We will be using the particles interpretation in the remainder of the paper. We denote the position, velocity, and mass of all particles by vectors $x, v \in \Re^d$, and $m \in \N^d$, respectively. We index particles by their ordering in space from left to right, \ie $x_i \leq x_{i+1}$. The vector $y \defeq x + v$ will be of use, so we assign it a symbol. It denotes the particle destinations after one time unit when no collisions occur. \citet{lin2019efficient} prove that applying isotonic regression to $y$ results in the proximal mapping of $\R$. In this paper, however, no knowledge of isotonic regression will be required.

\section{ALGORITHMS FOR $\prox_\R$}\label{sec:algorithms}

All of the algorithms in this section expect $w$ to have been sorted  and assigned to $x$ as a pre-processing step. Sorting has time complexity $O(d \log d)$ on a sequential machine. In the context of parallel algorithms, we assume access to a Parallel Random Access Machine (PRAM) with $p$ processors and $O(d)$ memory. The parallel time complexity of sorting is $O(\frac{d \log d}{p} + \log d)$~\citep{ajtai1983an, cole1988parallel}. The performance of sorting puts a ceiling on the performance that we can expect from algorithms for $\prox_\R$.

\subsection{Imminent Collisions Algorithm}
Matching the performance ceiling is easy on a sequential machine. In fact, apart from sorting, the rest of the algorithm can be implemented in \hl{$O(d)$ time}. The key idea is that the order of performing collisions does not matter, \ie the collision operation is associative. All that matters is which collisions occur. Thus, one can maintain a queue of all detected future collisions and iteratively pop the queue, perform a collision, check collisions for the new particle, and push newly detected collisions into the queue. To perform collisions efficiently, the vectors $x, v, m$ must be stored in doubly linked lists.

A collision is detected whenever $(y_i = x_i + v_i) \geq (y_{i + 1} = x_{i + 1} + v_{i + 1})$. We will refer to these collisions as \textit{imminent collisions} since they will occur regardless of other collisions.\footnote{Non-imminent collisions are harder to detect. For an if and only if condition for the collision of neighbouring particles, see \cref{sec:neighbour_collision}.} This algorithm is known in the literature as the \textit{pool adjacent violators algorithm}~\citep{best1990active}.

The algorithm above can be parallelized to some extent. We can detect all imminent collisions and perform them in parallel. The process is repeated until there are no more imminent collisions. We call this algorithm, \ica (pseudocode is provided in \cref{sec:extra_algorithms}).

\ica is fast when the total number of collisions is small but can be slow when a large cluster of particles forms. For example, consider
\begin{equation*}
y = \left[1, 1 - \frac{2}{d - 1}, 1 - \frac{4}{d - 1}, ..., 0, \epsilon, 2\epsilon, ..., \floor{\frac{d}{2}}\epsilon \right],
\end{equation*}
for some $0 < \epsilon \ll 1$. One can work out that all particles will eventually collide, but it will take $\frac{d}{2}$ imminent collisions rounds for all particles in the right half to collide since they are sorted. Therefore, in the worst case, the number of rounds is $O(d)$. Consequently, \ica has a worst-case \hl{parallel time complexity of $O(d)$} and might not benefit from parallelization.

\subsection{End Collisions Algorithm}
We continue our study of algorithms by noting some properties of the dynamical system. Let $\avg(u)$ denote the average of the elements of the vector $u$, and let $u_{i: j}$ for integers $i \leq j$ denote a vector of size $j - i + 1$ constructed from elements $i$ to $j$ (inclusive) of $u$. 

\begin{enumerate}
    \item When particles collide, they stick, so they cannot cross each other. Thus, the ordering of particles always remains the same.
    \item When particles $i$ and $i + 1$ collide, it must have been the case that $v_i > v_{i + 1}$. Hence, after collision, the velocity of particle $i$ increases in the negative direction, and the velocity of particle $i + 1$ increases in the positive direction. This also tells us that imminent collisions definitely occur, regardless of other collisions; A property that \ica relies on.
    \item Conservation of momentum implies that $\avg(y_{i: j})$ represents the final center of mass of particles $i$ to $j$ under the assumption that particles $i$ to $j$ form a closed system, that is, particles $i$ and $i - 1$ do not collide and particles $j$ and $j + 1$ do not collide.
\end{enumerate}

The proposition below tells us how many particles will collide with the left-most particle.

\begin{proposition}[Rightmost collision]\label{prop:rightmost_collision}
The rightmost collision of particle $1$ is with particle $\argminin_j \avg(y_{1: j})$. That is,
\begin{equation*}
    \text{particles } 1  \text{ and } i \text{ collide} \iff i \leq \argmin{j} \avg(y_{1: j}).
\end{equation*}
\end{proposition}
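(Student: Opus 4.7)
The plan is to identify the rightmost collision $K$ of particle $1$ with the largest minimizer of $j \mapsto \avg(y_{1:j})$ by combining momentum conservation with the no-crossing property of the particles. Throughout, write $x_i(t)$ for the position of original particle $i$ at time $t$, so $x_i(0) = x_i$ and the final configuration is $x_i(1)$. Let $K$ denote the rightmost label that collides with particle $1$, so particle $1$'s super-particle at time $1$ is exactly $\{1, \ldots, K\}$. I will show that $K$ equals the largest index attaining $\min_j \avg(y_{1:j})$, which is the value referred to by $\argmin{j}\avg(y_{1:j})$ in the statement.

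The first task is to establish the key inequality
\begin{equation*}
\sum_{i=1}^j x_i(1) \;\leq\; \sum_{i=1}^j y_i \qquad \text{for every } j \in \{1, \ldots, d\},
\end{equation*}
with equality precisely when no particle in $\{1, \ldots, j\}$ ever shares a super-particle with a particle in $\{j+1, \ldots, d\}$ during $[0,1]$. I would prove this by momentum accounting. Let $Q_j(t) \defeq \sum_{i=1}^j \dot x_i(t)$, so $Q_j(0) = \sum_{i=1}^j v_i$ and between collisions $\sum_{i=1}^j x_i(t)$ evolves linearly with slope $Q_j(t)$. Internal collisions (both merging super-particles carry only labels $\leq j$) preserve $Q_j$ by conservation of momentum, whereas a boundary collision between a super-particle $L$ carrying some label $\leq j$ and a super-particle $R$ carrying some label $> j$ satisfies $v_L > v_R$ (otherwise they would not collide), so the post-merge velocity is strictly smaller than $v_L$ and $Q_j$ strictly decreases. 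Integrating over $[0,1]$ yields the claimed bound.

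With this inequality in hand, the rest is a clean comparison of averages. By the definition of $K$ there is no boundary collision for the index $K$, so equality holds at $j = K$ and $x_1(1) = \cdots = x_K(1) = \avg(y_{1:K})$. For $j \leq K$, every one of $x_1(1), \ldots, x_j(1)$ equals $\avg(y_{1:K})$, so the inequality reads $\avg(y_{1:K}) \leq \avg(y_{1:j})$. For $j > K$, the positions $x_{K+1}(1), \ldots, x_j(1)$ are strictly greater than $x_1(1) = \avg(y_{1:K})$ because they sit in different super-particles and particles cannot cross, so $\frac{1}{j}\sum_{i=1}^j x_i(1) > \avg(y_{1:K})$ and the inequality now forces $\avg(y_{1:j}) > \avg(y_{1:K})$ strictly. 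Hence $K$ is exactly the largest minimizer of $j \mapsto \avg(y_{1:j})$. The stated ``iff'' follows at once: particles $1$ and $i$ collide if and only if $i$ lies in particle $1$'s super-particle $\{1, \ldots, K\}$, i.e.\ if and only if $i \leq K$.

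The main obstacle is the boundary-collision bookkeeping inside the key inequality. Concretely, when super-particles with labels $\{a, \ldots, j\}$ and $\{j+1, \ldots, b\}$ merge, one has to verify that the contribution to $Q_j$ drops by $\frac{(j-a+1)(b-j)}{b-a+1}(v_L - v_R) \geq 0$; once this elementary calculation is in place, every remaining step is a one-line comparison of partial averages.
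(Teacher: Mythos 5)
Your proof is correct, and it rests on the same physical fact as the paper's proof --- that interactions across the boundary between particles $j$ and $j+1$ can only move the center of mass of the prefix $\{1,\dots,j\}$ to the left --- but you package that fact differently. The paper reasons directly about particle $1$'s final position $y'_1$: since particle $1$ is always leftmost, $y'_1 \leq \avg(y_{1:j})$ for every $j$, and since particle $1$'s actual cluster is a closed system, $y'_1 = \avg(y_{1:K})$, which forces $K$ to be a minimizer. You instead prove the prefix inequality $\sum_{i\leq j} x_i(1) \leq \sum_{i\leq j} y_i$ by tracking the partial momentum $Q_j$ through each merge event and then compare averages. Your route buys three things: the momentum bookkeeping that the paper compresses into the sentence ``the final center of mass will only be smaller'' is made fully explicit (including the quantitative drop $\tfrac{(j-a+1)(b-j)}{b-a+1}(v_L-v_R)$ at a boundary merge, though you should also note that merges involving a cluster that \emph{already} straddles the boundary decrease $Q_j$ as well --- the same computation works); the strict inequality $\avg(y_{1:j}) > \avg(y_{1:K})$ for $j > K$ identifies $K$ as the \emph{largest} minimizer, resolving the tie-breaking in $\argminin_j \avg(y_{1:j})$ that the paper's statement and proof leave ambiguous; and your converse direction is an actual argument rather than the paper's one-line appeal to ``particle $1$ would end up at a different greater position.'' The only delicate point, shared with the paper, is the convention for clusters that first touch exactly at $t=1$: your step ``$x_i(1) > x_1(1)$ for $i > K$ because they sit in different super-particles'' needs the convention that touching at $t=1$ (equivalently $y_i \geq y_{i+1}$ for adjacent clusters) already counts as a collision, which is indeed how the paper defines imminent collisions, so the argument goes through.
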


\cref{prop:rightmost_collision} suggests another algorithm. We repeatedly find all the particles that collide on one end, merge them, and then remove them, since they form a closed system that has no effect on the rest of the system. We call this algorithm, \eca (pseudocode is provided in \cref{sec:extra_algorithms}). The sequential version of this algorithm is known in the literature as the \textit{minimum lower sets algorithm}~\citep{best1990active} and runs in \hl{$O(d^2)$ time}.

If we let $c$ be the final number of particle clusters, then the \hl{parallel time complexity is $O(\frac{d c}{p} + c \log d)$} for \eca. The reason is that there are $c$ iterations, each one taking parallel time $O(\frac{d}{p} + \log d)$ due to computing $\argminin_{j} \avg(y_{1: j})$. This algorithm is, thus, very fast when the number of collisions is very large and a small number of particle clusters forms in the end. In the worst case, however, there are no collisions, and the algorithm takes \hl{$O(\frac{d^2}{p} + d \log d)$ parallel time}, which is slow. The algorithm can be made slightly faster by only considering particles that participate in an imminent collision, but the overall worst-case complexity remains the same.

\subsection{Search Collisions Algorithm}
So far we have seen rather trivial parallelizations of existing algorithms. We now describe a novel parallel divide and conquer algorithm that is fast for all inputs and has a depth of $O(\log^3 d)$.

We split the $d$ particles into two halves of size $\frac{d}{2}$ (suppose $d$ is even) and solve each half as a closed system in parallel recursively. Next, we need to consider the interactions between the two halves. There can only be an imminent collision at $(\frac{d}{2}, \frac{d}{2}+1)$. If there is no imminent collision, then there is no interaction between the halves and the solution has been found. Otherwise, $\frac{d}{2}$ and $\frac{d}{2} + 1$ will collide, and the new particle may collide with one of its neighbours, and so on. The chain of collisions causes this cluster of particles to grow until eventually it consumes some particles $i^\star \leq \frac{d}{2}$ to $j^\star > \frac{d}{2}$. The goal will be to find the ends $i^\star$ and $j^\star$ of the cluster so that we can perform the collisions and obtain the final solution.

We show that if we pick an index $i \geq i^\star$ from the left half and calculate its \dfn{rightmost collision}
\begin{equation}
\operatorname{rmc}(i) \defeq \argminin_{k \geq i} \avg(y_{i:k}), 
\end{equation}
then $\operatorname{rmc}(i) > \frac{d}{2}$. And if we pick an index $i < i^\star$, then $\operatorname{rmc}(i) = i < \frac{d}{2}$. We can therefore use the condition $\operatorname{rmc}(i) > \frac{d}{2}$ to perform a binary search on $i$ to find $i^\star$. The other end $j^\star$ is then obtained from $\operatorname{rmc}(i^\star)$, due to \cref{prop:rightmost_collision}.

\begin{theorem}\label{thm:search_collision}
    Suppose $(k, k+1)$ is the only imminent collision and particles $i^\star$ to $j^\star$ will collide. Then, for any index $i \leq k$:
    \begin{enumerate}[label=\textbf{\arabic*}.]
        \item $i < i^\star \implies \operatorname{rmc}(i) = i$
        \item $i \geq i^\star \implies \operatorname{rmc}(i) > k$
    \end{enumerate}
\end{theorem}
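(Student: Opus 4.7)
The plan is to prove both parts by applying \cref{prop:rightmost_collision} to the closed cluster $\{i^\star,\ldots,j^\star\}$ and combining this with a dynamical observation about its leftmost particle. Let $\mu \defeq \avg(y_{i^\star:j^\star})$ denote the cluster's final common position. Three facts will do the bulk of the work. (F1) Since the full system has no collisions between $\{1,\ldots,i^\star-1\}$ and $\{i^\star,\ldots,d\}$, the subsystem $\{i^\star,\ldots,d\}$ evolves identically to the full system on these indices, and applying \cref{prop:rightmost_collision} to this subsystem gives $\mu \leq \avg(y_{i^\star:j})$ for every $j \geq i^\star$. (F2) Because $i^\star$ is permanently the leftmost particle of the cluster, every one of its collisions is with a slower group on its right; each such inelastic collision strictly reduces $i^\star$'s velocity, so integrating the resulting piecewise-constant velocity over $[0,1]$ yields $\mu < y_{i^\star}$. (F3) Because $(k,k+1)$ is the only imminent collision, $y$ is strictly increasing on $\{1,\ldots,k\}$, and by strict order preservation $y_{i^\star-1} < \mu$ (else particle $i^\star-1$ would have joined the cluster).

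For Part~1, assume $i < i^\star$. For $j \in \{i+1,\ldots,i^\star-1\}$ the strict monotonicity in (F3) gives $\avg(y_{i:j}) > y_i$ directly. For $j \geq i^\star$ I would split
\begin{equation*}
\avg(y_{i:j}) = \frac{(i^\star - i)\,\avg(y_{i:i^\star-1}) + (j - i^\star + 1)\,\avg(y_{i^\star:j})}{j - i + 1},
\end{equation*}
and bound the two inner averages: $\avg(y_{i:i^\star-1}) \geq y_i$ by (F3), and $\avg(y_{i^\star:j}) \geq \mu > y_{i^\star-1} \geq y_i$ by (F1) and (F3). At least one bound is strict, so $\avg(y_{i:j}) > y_i = \avg(y_{i:i})$ for every $j > i$, forcing $j(i) = i$.

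For Part~2, assume $i^\star \leq i \leq k$. On $j \in \{i,\ldots,k\}$ the strict monotonicity in (F3) makes $\avg(y_{i:j})$ strictly increasing in $j$, so its minimum on this range is $\avg(y_{i:i}) = y_i$. It therefore suffices to exhibit some $j^\dagger > k$ with $\avg(y_{i:j^\dagger}) < y_i$, and I will take $j^\dagger = j^\star$. Applying (F1) at $j = i-1$ gives $\mu \leq \avg(y_{i^\star:i-1})$, and decomposing
\begin{equation*}
\mu = \frac{(i-i^\star)\,\avg(y_{i^\star:i-1}) + (j^\star - i + 1)\,\avg(y_{i:j^\star})}{j^\star - i^\star + 1}
\end{equation*}
then forces $\avg(y_{i:j^\star}) \leq \mu$ (trivially an equality when $i = i^\star$). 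Combining with (F2) and (F3), $\avg(y_{i:j^\star}) \leq \mu < y_{i^\star} \leq y_i$, so the global minimum over $j \geq i$ lies strictly below the minimum over $\{i,\ldots,k\}$, proving $j(i) > k$.

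The main obstacle is (F2), the \emph{strict} inequality $\mu < y_{i^\star}$. Purely algebraic use of (F1) at $j = i^\star$ only yields $\mu \leq y_{i^\star}$, and a tie there corresponds to a degenerate grazing configuration in which the argmin set grows and $j(i)$ need not exceed $k$. I would dispatch this with the physical argument above: at the first collision involving particle $i^\star$, the velocity of its group drops from $v_{i^\star}$ to some $v' < v_{i^\star}$ and remains strictly below $v_{i^\star}$ thereafter, so $\mu - x_{i^\star} < v_{i^\star} = y_{i^\star} - x_{i^\star}$. The companion strict inequality $y_{i^\star-1} < \mu$ used in (F3) follows by the same reasoning: equality would mean particle $i^\star-1$ touches the cluster by time~$1$, contradicting $i^\star-1 \notin \{i^\star,\ldots,j^\star\}$.
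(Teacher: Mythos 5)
Your proof is correct, but it takes a noticeably different route from the paper's. The paper argues physically: for Part 1 it observes that a particle left of $i^\star$ participates in no collision, and for Part 2 it deletes the particles to the left of $i$, invokes \cref{prop:rightmost_collision} on the truncated system, and asserts that the cluster still forms, still contains $i$, and still extends past $k$ --- an assertion it does not really justify. You instead pin down the argmin by direct manipulation of the averages: (F1) gives $\mu=\min_{j\ge i^\star}\avg(y_{i^\star:j})$ by applying \cref{prop:rightmost_collision} to the closed subsystem $\{i^\star,\dots,d\}$, (F3) gives strict monotonicity of $y$ left of $k$, and the two weighted-average decompositions convert these into the chains $\avg(y_{i:j})>y_i$ (Part 1) and $\avg(y_{i:j^\star})\le\mu<y_{i^\star}\le y_i$ together with $\avg(y_{i:j})$ increasing on $j\le k$ (Part 2). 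What your version buys is rigor and self-containedness --- in particular you make explicit why the minimum over $j\le k$ sits at $j=i$ and why some $j>k$ beats it, which is exactly the step the paper waves through. What it costs is the one genuinely delicate point you already flag: the strict inequalities $\mu<y_{i^\star}$ and $y_{i^\star-1}<\mu$ rest on the convention that a grazing contact at exactly $t=1$ counts as (respectively, does not count as) membership in the cluster; if particle $i^\star$'s first collision occurs at $t=1$ exactly, your integral argument gives only $\mu\le y_{i^\star}$ and the argmin in Part 2 ties at $j=i^\star$. The paper's proof silently ignores this degenerate configuration as well (and it is harmless for the algorithm, since merging a grazing particle does not change the final positions), so your treatment is, if anything, the more honest of the two; it would be worth stating the tie-breaking convention for $\argminin$ once and for all.
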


The binary search consists of $O(\log d)$ steps, each with a depth of $O(\log d)$, due to computing $\argminin_k \avg(y_{i:k})$. Thus, merging the solution of subproblems has depth $O(\log^2 d)$. There will be $\log d$ levels of merging in the divide and conquer binary tree, resulting in a total depth of $O(\log^3 d)$. Each step of binary search requires $O(d)$ operations in total on all subproblems. There are $\log^2 d$ binary search steps during the entire algorithm, giving a total work of $O(d \log^2 d)$. Altogether, we get a \hl{parallel time complexity of $O(\frac{d \log^2 d}{p} + \log^3 d)$}. We call this algorithm, \sca; See \cref{alg:search_collisions}.

\begin{algorithm}[!h]
\caption{Search Collisions Algorithm}\label{alg:search_collisions}
\small
\begin{algorithmic}
\Function{PerformCollisions}{$x, v, m, i, j$} 
    \State $m\_total \gets \operatorname{sum}(m[i:j])$
    \State $x[i] \gets \operatorname{sum}(x[i: j] \odot m[i: j]) / m\_total$ 
    \State $v[i] \gets \operatorname{sum}(v[i: j] \odot m[i: j]) / m\_total$ 
    \State $m[i] \gets m\_total$
    \State $m[i + 1: j] \gets 0$ 
\EndFunction
\vspace{0.5em}
\Function{RightmostCollision}{$x, v, m, i$}
    \State $d \gets \operatorname{size}(x)$
    \State $y\_cumsum \gets \operatorname{prefix\_sum}((x + v)[i:d] \odot m[i:d])$
    \State $avg \gets y\_cumsum / \operatorname{prefix\_sum}(m[i:d])$%
    \State $j \gets i + \operatorname{argmin}(avg) - 1$
    \State \Return j
\EndFunction
\vspace{0.5em}
\Function{Merge}{$x, v, m$}
    \State $d \gets \operatorname{size}(x)$
    \State $le \gets 1$
    \State $ri \gets \frac{d}{2} + 1$ 
    \For{$t=1$ to $\log_2 d - 1$} 
        \State $mid = \floor{(le + ri - 1) / 2}$
        \State $j \gets$ \Call{RightmostCollisions}{$x, v, m, mid$} 
        \If{$j \geq \frac{d}{2} + 1$}
            \State $ri \gets mid + 1$
        \Else
            \State $le \gets mid + 1$
        \EndIf
    \EndFor
    \State $j \gets$ \Call{RightmostCollision}{$x, v, m, le$}
    \State \textbf{if} $j > le$ \textbf{then} \Call{PerformCollisions}{$x, v, m, le, j$}
\EndFunction
\vspace{0.5em}
\Function{SearchCollisions}{$x, v, m$}
    \State \textbf{if} $\operatorname{size}(x) = 1$ \textbf{then} \Return
    \State Concatenate zeros at the ends of $x, v, m$ to make sizes be powers of $2$
    \State $d \gets \operatorname{size}(x)$
    \ParDo 
        \State \Call{SearchCollisions}{$x[1: \frac{d}{2}], v[1: \frac{d}{2}], m[1: \frac{d}{2}]$}
        \State \Call{SearchCollisions}{$x[\frac{d}{2}+1:], v[\frac{d}{2}+1:], m[\frac{d}{2}+1:]$}
    \End
    \State \Call{Merge}{$x, v, m$}
\EndFunction
\end{algorithmic}
\end{algorithm}

With $p \in \Omega(\frac{d}{\log d})$ processors, there is no asymptotic slowdown of the algorithm from a lack of processors and the running time is $O(\log^3 d)$. If there are too few processors, $p \in o(\log^2 d)$, then the work-inefficiency of \sca makes it worse than \ica. However, with modern GPUs, the number of processors does not become a bottleneck for any realistic problem size. For example, considering each of the 5,120 CUDA cores of NVIDIA V100 as a processor and ignoring coefficients in the running times, \sca has better worst-case performance for $d$ up to $2^{\sqrt{p}} = 2^{\sqrt{5120}} \approx 10^{21}$.

\begin{table}[!h]
    \centering
    \caption{Summary of the time-complexities of the parallel algorithms studied in this work.}
    \label{table:algs}
    \begin{tabular}{c c c} 
        \toprule
        \textbf{Parallel Algorithm} & \textbf{time complexity} \\ 
        \midrule
        Imminent collisions & $O(d)$  \\
        End collisions         & $O(\frac{d^2}{p} + d \log d)$ \\
        Search collisions & $O(\frac{d \log^2 d}{p} + \log^3 d)$ \\ 
        \bottomrule
    \end{tabular}
\end{table}

\section{REWINDING}\label{sec:rewinding}

It is common in machine learning to use $\ell_1$ regularization to obtain sparse solutions. But sparsity is essentially an $\ell_0$ constraint and $\ell_1$ is employed as a \textit{convex} relaxation of $\ell_0$. The $\ell_1$ norm does more than just sparsify the solution; It also makes all weights smaller. To rectify this side-effect, we propose \textit{rewinding}. The amount of rewinding will be parameterized by $\rho$.

Rewinding for $\ell_1$ amounts to moving weights that were not zeroed back towards where they started. When $\rho = 1$, the weights completely return to where they started and the side-effect is entirely removed. This is equivalent to the proximal mapping of $\ell_0$, which results in the iterative hard-thresholding algorithm~\citep{blumensath2008iterative}. If $\rho = 0$, there is no rewinding, and we recover the proximal mapping of $\ell_1$. 

\begin{algorithm}[!h]
\caption{$\operatorname{prox}_{\alpha \mathcal{R} + \beta \ell_1}$ with rewinding $\rho$.}\label{alg:prox}
\small
\begin{algorithmic}[1]
\Function{Prox}{$w, \alpha, \beta, \rho$}
    \State $d \gets \operatorname{size}(w)$
    \State $x, sort\_indices \gets \operatorname{sort}(w)$ 
    \State $v \gets \frac{\alpha}{d - 1} (d - 2 \cdot \operatorname{range}(d) + 1)$ \Comment{This is $-\alpha\nabla \R(x)$} 
    \State $m \gets \operatorname{ones\_array}(d)$ 
    \State $x, v, m \gets$ \Call{\raisebox{-0.45ex}{*}Collisions}{$x, v, m$} 
    \State $zero\_mask \gets |x + v| < \beta$ 
    \State $v \gets v - \beta \cdot \operatorname{sign}(x + v)$ 
    \State $x[zero\_mask] \gets 0$
    \State $v[zero\_mask] \gets 0$
    \State $x \gets x + (1 - \rho) \cdot v$ 
    \State $x \gets \operatorname{repeat\_interleave}(x, m)$
    \State $w[sort\_indices] \gets x$
\EndFunction
\end{algorithmic}
\end{algorithm}

The same concept can be applied to weight-sharing regularization.

\cref{alg:prox} applies weight-sharing and sparsity regularization with rewinding. Before line 11, $x$ satisfies weight-sharing and sparsity and $v$ contains the remaining displacement of the proximal mapping, that is, $\prox_{\alpha\R + \beta \ell_1}(w) = x + v$. However, adding $v$ to $x$ does not induce any additional sparsity or weight-sharing. We, thus, consider $v$ as the side-effect of convexifying sparsity and weight-sharing. To reduce the side-effect, we scale $v$ by $1 - \rho$ for rewinding.

As an example, for sparsity regularization with $\ell_1$, after processing constraints we have $x = w\indic{|w| > \beta}$ and $v = -\beta \indic{|w| > \beta}$. The $\rho$-rewinded update is then given by
\begin{equation}\label{eq:rewinding_l1}
x + (1 - \rho) v = (w - (1 - \rho) \beta)\indic{|w| > \beta}.
\end{equation}
This is rather similar to $\beta$-LASSO~\citep{neyshabur2020towards}, which uses a parameter $\beta > 1$ (different from our $\beta$) to perform more aggressive thresholding. The analog of their $\beta$ in our method is $\frac{1}{1 - \rho}$. However, their proposed method has a distinction from proximal methods in that the gradients of $\ell_1$ and the task-loss $\mathcal{L}$ are computed at the same point.

\section{EXPERIMENTS}\label{sec:experiments}

\subsection{MNIST on a Torus}

We consider a translation-invariant version of the MNIST dataset which we refer to as MNIST on a torus. We investigate whether our regularization can enhance the generalization of an MLP. While the answer to this question is positive, an invariant CNN model achieves higher accuracy compared to an MLP model regularized with $\mathcal{R}(w)$. However, CNNs make strong assumptions about the topology (i.e., pixel locality) and symmetry of data. When these assumptions are violated, they perform poorly, while the performance of our method is unaffected. To showcase this, we introduce a change in the dataset by performing a permutation on the pixels that destroys the translational symmetry of the images. In this setting, our results indicate that an MLP with weight-sharing regularization achieves the best accuracy, outperforming CNN by a significant margin.

\begin{figure}[!h]
    \centering
    \includegraphics[width=0.95\linewidth]{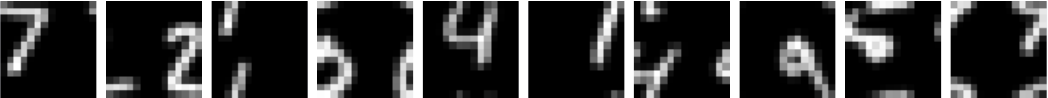}
    \caption{Sample digits from MNIST on a torus.}
    \label{fig:mnist_torus}
\end{figure}

\textbf{Dataset description.} We consider a version of the MNIST dataset~\citep{lecun1989backpropagation} in which the images lie on a torus. That is, pixels that cross the left boundary of the image reappear at the right boundary and vice versa. The same happens with the upper and bottom boundaries; See \cref{fig:mnist_torus}. Each image in the dataset is randomly translated. The task is, as usual, to classify the digits. In this setting, the task is invariant to the natural action of the group of 2D circular translations $\mathbb{Z}_k \times \mathbb{Z}_k$, where $k$ is the width and height of the input images. 

In our experiments, we also consider two variants of this dataset where pixels have been shuffled: (1) With symmetry but no locality. (2) With neither symmetry nor locality. Both of these variants are obtained by applying a fixed permutation of pixels on each image of the dataset. However, the order of this permutation w.r.t. the 2D circular translation is important. Performing the permutation before translation destroys the locality of pixels while maintaining symmetry whereas performing the operations in the opposite order removes symmetry as well as locality. 

\textbf{Experiment configuration.} To simulate data scarcity, we train with 60\% of the MNIST dataset. As a baseline, we use an invariant CNN model and an MLP that matches the architecture of the CNN, with weight-sharing and sparsity removed.

\textbf{Prediction asymmetry metric.}
We also provide a metric to measure the symmetry of each model. We define the \dfn{prediction asymmetry} of a model as the percentage of test examples for which there exist two translations of the input whose model predictions do not agree. We see that weight-sharing regularization has produced a more symmetric model w.r.t. this metric compared to other baselines.

\textbf{Results.} As shown in \cref{table:mnist_torus}, we observed that the unregularized fully connected neural network overfits the data. While it achieves 99.89\% training accuracy, test accuracy peaks at 92.40\%. Our weight-sharing regularization technique narrows the generalization gap, achieving 95.04\% accuracy. Weight-sharing regularization also reduces the prediction asymmetry of an MLP model significantly. However, an invariant CNN is by design the best model for the MNIST on torus dataset and achieves the highest accuracy.

\begin{table}[!h]
    \centering
    \caption{Results for the MNIST on torus experiment.}
    \label{table:mnist_torus}
    \begin{tabular}{l c c c} 
        \toprule
        \textbf{Model} & \textbf{Test Acc.} & \textbf{Pred. Asym.}\\ 
        \midrule
        \textbf{CNN}         & \textbf{98.82\%}  &  \textbf{0.0\%} \\
        CNN (-locality)      & 97.98\%  &  0.0\% \\
        CNN (-symmetry)      & 71.32\%  &  85.7\% \\
        \hline
        MLP                  & 92.40\% & 44.5\% \\
        MLP + $\ell_1$       & 93.57\% &  37.5\% \\
        \textbf{MLP + $\R$}  & \textbf{95.04\%} &  \textbf{31.6\%}\\ 
        \bottomrule
    \end{tabular}
\end{table}

We now consider the shuffled MNIST on torus datasets. The training of MLP-based models is completely unaffected in these datasets, and all of them achieve the exact same accuracy. As only the ordering of the input has changed, the MLP-based models can simply learn a permutation of the weights in the first layer to achieve the same performance as before. From \cref{table:mnist_torus} we observe that the absence of locality does not seem to significantly affect CNNs; however, the accuracy of this model drops by more than 25\% in the absence of symmetry. In this experiment, the advantage of weight-sharing regularization becomes evident, as it achieves the best accuracy among all models. Unlike CNN with its hard-coded bias, weight-sharing regularization aids MLP in dynamically learning the right weight-sharing pattern.

\subsection{CIFAR10}
We consider the task of training a shallow CNN and its corresponding fully connected network on CIFAR10~\citep{cifar} as suggested by ~\citep{neyshabur2020towards}. The author suggests a variant of the $\ell_1$ regularizer, $\beta$-LASSO, to learn convolution-like structures from scratch. Our experiments show that adding weight-sharing regularization to the fully connected neural network enables it to more effectively learn convolution-like patterns, which are recognized as optimal for vision data.
\begin{table}[!h]
    \centering
    \caption{Results for the CIFAR10 experiment.}
    \label{tab:cifar}
    \begin{adjustbox}{max width=0.48\textwidth}
    \begin{tabular}{ l c c c }
        \toprule
        \textbf{Model} & \textbf{Test Acc.} & \textbf{Sparsity} & \textbf{Weight Sh.}\\
        \midrule
        CNN            & 81.85\%  & 99.99\%  & 99.61\%   \\
        \hline
        MLP            & 69.20\%  & 0.0\%   & 91.05\% \\
        $\beta$-LASSO  & 71.37\%  & 95.33\% & 41.96\%  \\
        Our $\ell_1$  & 73.66\%  & 99.58\%  & 2.04\%\\
        $\R$ (Subgradient) & 69.96\%  & 0.0\%  & 2.98\% \\
        $\R$           & 73.50\%  & 0.0\%  & 99.96\% \\
        $\R$ + Our $\ell_1$  & \textbf{73.75}\%  & 99.71\% & 1.17\% \\
        \bottomrule
    \end{tabular}
\end{adjustbox}
\end{table}

\textbf{Experiment configuration.} We follow the setup of~\citep{neyshabur2020towards} and train a 3-layer fully connected network on a heavily augmented version of CIFAR10. The augmentations involve cropping and small rotations of the images, therefore, when training with weight-sharing regularization, we expect to see similar filters that are translated and tilted. We train the models for $400$ epochs with a batch-size of $512$ and a learning-rate of $0.1$. We performed a hyperparameter sweep for $\alpha$ and $\beta$ for each method and fixed $\rho$ to 0.98.

\textbf{Weight-sharing metric.} Our measure of weight-sharing in \cref{tab:cifar} is obtained by the formula $\frac{\text{\#non-zero distinct weights}}{\text{\#non-zero weights}}$. We do not include zeroed weights so that sparsity does not affect the measure.

\textbf{Results.} \cref{tab:cifar} shows that both our weight-sharing regularization and our adaptation of $\ell_1$ regularization improve test accuracy compared to MLP and $\beta$-LASSO. We note that regularization with $\R$ achieves a weight-sharing measure that is closely aligned with that of the CNN network. As shown in the last row of \cref{tab:cifar}, we also observe that incorporating $\ell_1$ regularization into weight-sharing is likely to drive the shared weights towards zero. \\
\cref{tab:cifar} also highlights the importance of using the proximal gradient descent algorithm compared to the subgradient method, which fails to achieve weight-sharing when trained with $\R$.

\begin{figure}[!h]
    \centering
    \includegraphics[width=0.95\linewidth]{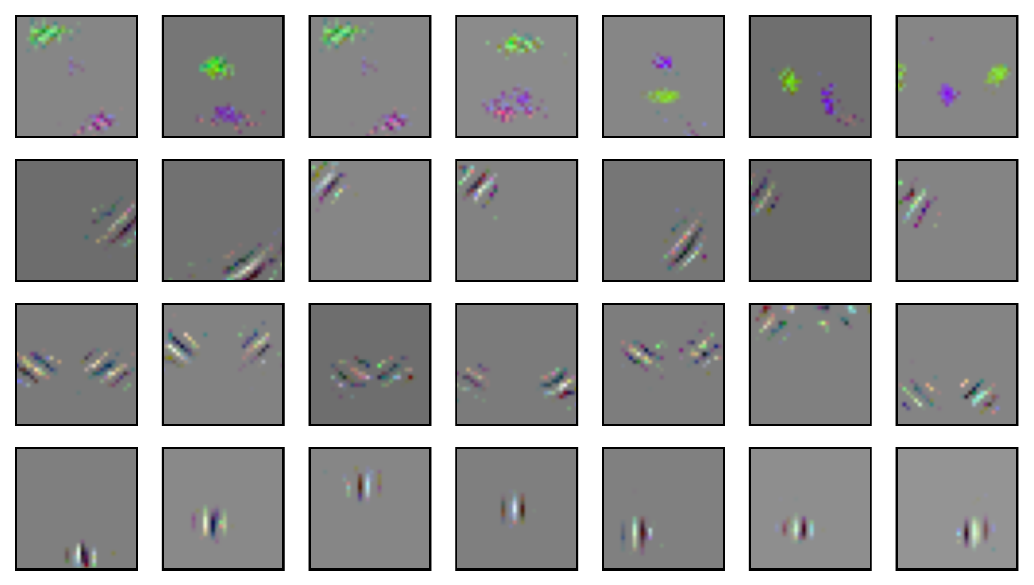}
    \caption{The emergence of learned convolution-like filters in a fully connected network with weight-sharing regularization on CIFAR10.}
    \label{fig:learned_conv_selected}
\end{figure}

\begin{figure}[!h]
    \centering
    \includegraphics[width=0.9\linewidth]{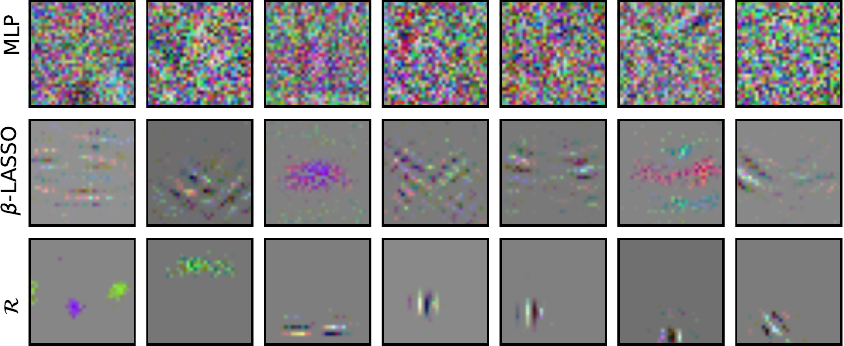}
    \caption{A random sample of learned filters from the 256 filters with the highest number of non-zero weights.} 
    \label{fig:learned_conv_random}
\end{figure}

By visualizing the rows of the first layer's weight matrix, we notice some clusters of similar and convolution-like patterns. Some of these similar patterns are demonstrated in \cref{fig:learned_conv_selected}. While~\citep{neyshabur2020towards} recreated sparsity of patterns, we also observe sparse weights with weight-sharing among them. These local patterns are translated across the input image to simulate convolution in the fully connected neural network.

\textbf{Practical remarks.} By comparing the training times of various algorithms, we observe that \ica is faster for weights encountered during training compared to \sca. We observed that \ica requires $\sim$500 rounds of iterations for $d \approx 10^8$ while in the worst case, it would require $d$ rounds. We, therefore, recommend using \sca as a backup in case \ica gets stuck on a difficult input. Such a setup will \textit{guarantee} fast training. Further details regarding runtimes can be found in \cref{app:run_time_compare}.

\section{FUTURE WORK}\label{sec:future_work}

Even though $\prox_\R$ greatly benefits from parallelization, training with weight-sharing regularization is still several times slower than training without. To bring down the cost of training, it might be possible to reuse computation from the previous training step. Is it possible to design an algorithm that starts with a candidate weight-sharing, and then edits it to obtain the correct weight-sharing?

Weight-sharing regularization with rewinding introduces new hyper-parameters $\alpha$ and $\rho$ that have to be tuned. The hyper-parameters can be fixed or they can vary during training. There is, therefore, a large space of possibilities to explore, and it is currently unclear what kind of training scheme is best.

A possible generalization of $\R$ is given by

\begin{equation}
    \R_{\mat{C}}(w) = \frac{1}{d - 1}\sum_{i < j} \mat{C}_{i,j} \norm{w_i - w_j}_2,
\end{equation}

where $\mat{C} \in \Re_+^{d \times d} > 0$ is a symmetric matrix and $w \in \Re^{d \times k}$. We have studied $\mat{C} = \mat{1}$, where all pairs of weights are encouraged to be equal, and $k = 1$, where each weight is a scalar. As an example, when
\begin{align*}
        \mat{C}_{i, j} =
        \begin{cases}
            1 & \abs{i - j} = 1\\
            0 & \text{otherwise}
        \end{cases},
    \end{align*}
we retrieve the fused lasso regularizer~\citep{tibshirani2005sparsity}. Does there exist fast parallel algorithms for $\prox_{\R_{\mat{C}}}$ as well?

Lastly, we leave the study of rewinding and its properties for future work.

\section{CONCLUSION}\label{sec:conclusion}

We have proposed ``weight-sharing regularization'' via $\R$ and obtained its proximal mapping by formulating it in terms of the solution to an ODE. By analogy to a physical system of interacting particles, we developed a highly parallel algorithm suitable for modern GPUs. Our experiments indicate that this method can be applied to the training of weight-sharing regularized deep neural networks using proximal gradient descent to achieve better generalization in the low-data regime. While our preliminary results are encouraging, 
there's room for further research and improvement in this domain.

\subsubsection*{Acknowledgments}
We thank Juan Duque, Juan Ramirez, Gauthier Gidel, Quentin Bertrand, and Reza Babanezhad for their feedback on an initial draft of this paper. This research was enabled in part by computing resources provided by Mila (mila.quebec). This work was supported by the Canada CIFAR AI Chair Program. Simon Lacoste-Julien is a CIFAR Associate Fellow in the Learning Machines \& Brains program.

\bibliography{paper}
\bibliographystyle{paper}

\section*{Checklist}

 \begin{enumerate}

 \item For all models and algorithms presented, check if you include:
 \begin{enumerate}
   \item A clear description of the mathematical setting, assumptions, algorithm, and/or model. \textit{Yes. We provide precise pseudocode for all algorithms.}
   \item An analysis of the properties and complexity (time, space, sample size) of any algorithm. \textit{Yes. time complexity is provided for all algorithms and space is always $O(d)$.}
   \item (Optional) Anonymized source code, with specification of all dependencies, including external libraries. \textit{Yes. Our code is available on \href{https://github.com/motahareh-sohrabi/weight-sharing-regularization}{github}.}
 \end{enumerate}

 \item For any theoretical claim, check if you include:
 \begin{enumerate}
   \item Statements of the full set of assumptions of all theoretical results. \textit{Yes.}
   \item Complete proofs of all theoretical results. \textit{Yes. All proofs are provided in \cref{sec:proofs}.}
   \item Clear explanations of any assumptions. \textit{Yes.}
 \end{enumerate}

 \item For all figures and tables that present empirical results, check if you include:
 \begin{enumerate}
   \item The code, data, and instructions needed to reproduce the main experimental results (either in the supplemental material or as a URL). \textit{Yes. We have released our source code on \href{https://github.com/motahareh-sohrabi/weight-sharing-regularization}{github} to facilitate the reproduction of all experimental results.}
   \item All the training details (e.g., data splits, hyperparameters, how they were chosen). \textit{Yes. Some training details are provided in the text and the rest is provided in the appendix.}
   \item A clear definition of the specific measure or statistics and error bars (e.g., with respect to the random seed after running experiments multiple times). \textit{No. The measures that we use are mentioned, but we do not provide error bars.}
   \item A description of the computing infrastructure used. (e.g., type of GPUs, internal cluster, or cloud provider). \textit{Yes. We use NVIDIA GPUs such as V100 and A100.}
 \end{enumerate}

 \item If you are using existing assets (e.g., code, data, models) or curating/releasing new assets, check if you include:
 \begin{enumerate}
   \item Citations of the creator If your work uses existing assets. \textit{Yes. We have cited relevant publications for the MNIST and CIFAR10 datasets.}
   \item The license information of the assets, if applicable. \textit{No. These datasets are standard in machine learning.}
   \item New assets either in the supplemental material or as a URL, if applicable. \textit{Not Applicable.}
   \item Information about consent from data providers/curators. \textit{Not Applicable.}
   \item Discussion of sensible content if applicable, e.g., personally identifiable information or offensive content. \textit{Not Applicable.}
 \end{enumerate}

 \item If you used crowdsourcing or conducted research with human subjects, check if you include:
 \begin{enumerate}
   \item The full text of instructions given to participants and screenshots. \textit{Not Applicable.}
   \item Descriptions of potential participant risks, with links to Institutional Review Board (IRB) approvals if applicable. \textit{Not Applicable.}
   \item The estimated hourly wage paid to participants and the total amount spent on participant compensation. \textit{Not Applicable.}
 \end{enumerate}

 \end{enumerate}

\clearpage
\onecolumn
\appendix
\section{BACKGROUND}\label{sec:background}

\subsection{Optimization}

Since $\R$ is not differentiable everywhere, we will be using concepts that generalize the gradient to non-differentiable functions, namely, the subgradient and subdifferential.

\begin{definition}[Subgradient]
    $g$ is a subgradient of a convex function $f$ at $x \in \dom f$ if
    \begin{equation*}
        f(y) \geq f(x) + g \cdot (y-x) \qquad \forall x \in \dom f.
    \end{equation*}
\end{definition}

\begin{definition}[Subdifferential]
    The set of subgradients of $f$ at the point $x$ is called the subdifferential of $f$ at $x$, and is denoted $\partial f(x)$.
    \begin{equation*}
        \partial f(x) \defeq \big\{ g \;|\; f(y) \geq f(x) + g \cdot (y-x),\ \forall y \in \dom f \big\}
    \end{equation*}
\end{definition}

The subdifferential is a closed convex set. Each point $y \in \dom{f}$ puts a linear inequality constraint on $g$ which results in a closed convex set. The result follows from the fact that the intersection of any collection of closed convex sets is closed and convex.

In the text, we rewrite $\R$ as a $\max$ and use the following proposition~\citep[Proposition 2.3.12]{clarke1990optimization} to obtain its subdifferential.

\begin{proposition}[Subdifferential of $\max$]\label{prop:subdiff_max}
    Let
    \begin{equation*}
    f(x) = \max\ \{f_1(x), ..., f_k(x)\},
    \end{equation*}
    where $f_i(x)$ are differentiable convex functions. A function $f_i$ is \dfn{active} at $x$ if $f(x) = f_i(x)$. Let $I(x)$ denote the set of active functions at $x$. Then,
    \begin{equation*}
        \partial f(x) = \conv \bigcup_{i \in I(x)} \nabla f_i(x),
    \end{equation*}
    where $\conv$ denotes the convex hull.
\end{proposition}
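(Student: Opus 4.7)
The plan is to establish the two inclusions separately, with the $\supseteq$ direction being straightforward and the $\subseteq$ direction requiring a separating hyperplane argument combined with a directional derivative calculation.

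For the easy inclusion $\supseteq$, I would take any $g = \sum_{i \in I(x)} \lambda_i \nabla f_i(x)$ with $\lambda_i \geq 0$ and $\sum_{i \in I(x)} \lambda_i = 1$, and verify the subgradient inequality directly. For each active $i$, convexity of $f_i$ gives $f_i(y) \geq f_i(x) + \nabla f_i(x)^\top(y - x) = f(x) + \nabla f_i(x)^\top(y - x)$, where the equality uses that $i$ is active. Since $f(y) \geq f_i(y)$ for every $i$, taking a convex combination over $i \in I(x)$ yields $f(y) \geq f(x) + g^\top(y - x)$, so $g \in \partial f(x)$. The set on the right is closed and convex (convex hull of a finite set), so this inclusion is clean.

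The harder direction $\subseteq$ is the main obstacle. The plan is to use a separating hyperplane argument, for which I first need a lemma on the directional derivative: for any direction $d$,
\begin{equation*}
f'(x; d) = \max_{i \in I(x)} \nabla f_i(x)^\top d.
\end{equation*}
This follows because for $t > 0$ small enough, inactive functions $f_j$ (with $f_j(x) < f(x)$) remain strictly below by continuity, so $f(x + td) = \max_{i \in I(x)} f_i(x + td)$ for small $t$, and then dividing by $t$ and letting $t \to 0^+$ gives the stated max over gradients of active functions. This is the step that requires a little care, since it relies on continuity of each $f_i$ to localize to the active set.

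Now suppose, for contradiction, that $g \in \partial f(x)$ but $g \notin C \defeq \conv \bigcup_{i \in I(x)} \nabla f_i(x)$. Since $C$ is a nonempty closed convex set (finite convex hull in $\Re^n$), the separating hyperplane theorem gives a direction $d$ and a scalar $\alpha$ with $g^\top d > \alpha \geq v^\top d$ for every $v \in C$; in particular $g^\top d > \nabla f_i(x)^\top d$ for every $i \in I(x)$, hence $g^\top d > \max_{i \in I(x)} \nabla f_i(x)^\top d = f'(x;d)$. On the other hand, the subgradient inequality applied at $y = x + t d$ for $t > 0$ gives $f(x + td) \geq f(x) + t g^\top d$, so $f'(x;d) \geq g^\top d$, a contradiction. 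Hence $g \in C$, completing the proof.

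The only subtle step is the directional derivative identity, but since the statement quotes standard convex analysis (Clarke 1990), I would cite this lemma or prove it in one line using continuity of the $f_i$ at $x$; the separating hyperplane step is then short and mechanical.
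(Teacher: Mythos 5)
The paper does not actually prove this proposition: it is quoted verbatim from \citet[Proposition 2.3.12]{clarke1990optimization} and used as a black box to derive the subdifferential of $\R$ in \cref{eq:subdiff_R}, so there is no in-paper argument to compare against. Your proof is correct and is the standard Danskin-style argument for finite max-functions, and it would serve as a valid self-contained replacement for the citation. Both halves check out: the $\supseteq$ direction is exactly the convex-combination verification of the subgradient inequality, and the $\subseteq$ direction correctly combines strict separation with the directional-derivative identity $f'(x;d) = \max_{i \in I(x)} \nabla f_i(x)^\top d$. Three small points are worth making explicit if you write this up. First, the localization step (inactive $f_j$ stay strictly below for small $t>0$) and the interchange $\lim_{t \to 0^+} \max_{i \in I(x)} = \max_{i \in I(x)} \lim_{t \to 0^+}$ both rely on the index set being \emph{finite}; this holds here ($k$ is finite, and in the paper's application $k = d!$), but the identity can fail for infinite families, which is why Clarke's general version needs upper-semicontinuity hypotheses. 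Second, your separation step needs \emph{strict} separation of the point $g$ from $C$; this is licensed because $C$ is the convex hull of finitely many points, hence compact, so you can separate $g$ strictly from $C$ — stating compactness rather than just closedness makes the step airtight. Third, the differentiability hypothesis is what gives continuity of each $f_i$ (used in localization) and the single-point gradients $\nabla f_i(x)$ (used to make $C$ a polytope); your proof uses both, so the hypotheses are fully consumed. None of these are gaps in substance — the proof is sound as written.
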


Proximal optimization methods rely on the proximal mapping, defined below.

\begin{definition}[Proximal mapping]
The proximal mapping of a closed convex function $f$ is
\begin{equation*}
\prox_f(x) \defeq \argmin{u}\left(f(u)+\frac{1}{2}\norm{u-x}_2^2\right).
\end{equation*}
\end{definition}

Since $u \mapsto f(u)+\frac{1}{2}\norm{u-x}_2^2$ is closed and strongly convex, the proximal map exists and is \textit{unique}.

The \textit{optimality condition} for minimizing a non-differentiable convex function $g$ is $0 \in \partial g(x^\star)$. Applying this fact to the proximal mapping tells us that
\begin{equation}\label{eq:prox_opt}
u^\star = \prox_f(x) \iff x - u^\star \in \partial f(u^\star).
\end{equation}

Intuitively, the result says that after taking a proximal step from $x$ to $u^\star$, there exists a subgradient at $u^\star$ that takes you back to $x$.

Gradient descent can be seen as repeated application of the proximal mapping to a local linearization of the objective function. More explicitly, let $f_y$ denote the linearization of $f$ at $y$:
\begin{equation}
f_y(x) \defeq f(y) + (x - y) \cdot \nabla f(y).
\end{equation}

Then, a gradient descent step on $f$ with step-size $\eta$ can be written as
\begin{equation}\label{eq:gd_prox}
x^{(t + 1)} = x^{(t)} - \eta \nabla f(x^{(t)}) = \prox_{\eta f_{x^{(t)}}}(x^{(t)}).    
\end{equation}

Whenever the objective function is the sum of a differentiable component $f$ and a non-differentiable component $h$ for which we can compute a proximal mapping, an optimization algorithm known as \textit{proximal gradient descent} can be used. Similar to gradient descent, it uses a linearization of $f$ at each time-step, however, $h$ is not linearized. Specifically, the proximal mapping of $f_x + h$ is used. The algebraic manipulations below show that the algorithm alternates between gradient descent steps on $f$ and proximal mappings of $h$.

\begin{align*}
\prox_{f_x + h}(x) &= \argmin{u}\bigg( f(x) + (u - x) \cdot \nabla f(x) + h(u) +\frac{1}{2}\norm{u-x}_2^2 \bigg) \\
&= \argmin{u}\bigg( h(u) + \frac{1}{2} \norm{u - (x - \nabla f  (x))}_2^2 \bigg) \\
&= \prox_h(x - \nabla f  (x))
\end{align*}

Therefore, proximal gradient descent is an efficient algorithm whenever $\prox_h$ can be computed efficiently. Similar to \cref{eq:gd_prox}, a step-size can be included to get the update equation
\begin{equation}\label{eq:pgd}
    x^{(t + 1)} = \prox_{\eta h}(x^{(t)} - \eta\nabla f(x^{(t)})).
\end{equation}
\begin{example}
    As an example, let us obtain the proximal mapping of $f(x) = \abs{x}$ for $x \in \Re$. The subdifferential of $f$ is given by
    \begin{align*}
        \partial f(x) =
        \begin{cases}
            \{1\}   & x > 0\\
            [-1, 1] & x = 0\\
            \{-1\}  & x < 0
        \end{cases}.
    \end{align*}
    The optimality condition is $0 \in \partial f(u) + u - x$, which gives us
    \begin{align*}
        0 \in
        \begin{cases}
            \{1 + u - x\}   & u > 0\\
            [-1, 1] + u - x & u = 0\\
            \{-1\} + u - x  & u < 0
        \end{cases}.
    \end{align*}
    We then solve for $u$. The three cases can be solved independently.
    \begin{align*}
        u =
        \begin{cases}
            x - 1 & x > 1\\
            0     & x \in [-1, 1]\\
            x + 1 & x < -1
        \end{cases}    
    \end{align*}
    In $\Re^d$, element-wise application of the above mapping gives us the proximal mapping of the $\ell_1$ norm, also commonly known as \textit{soft-thresholding}.
\end{example}

\subsection{Parallel Computation}

In parallel computation, we assume access to a machine with multiple processors that is able to perform multiple operations at the same time. With access to such a parallel machine with $p$ processors, one could potentially gain a $p \times$ speedup of any sequential algorithm. The actual performance improvement depends on the structure of the algorithm and the inter-dependencies among the operations of the algorithm. For some problems, new algorithms have to be designed to be able to leverage parallelism, such as what we do in this work.

The commonly used framework for analyzing parallel algorithms is the \textit{work-depth model}. Let $n$ denote the size of the problem. The \dfn{depth} $D(n)$ of an algorithm, also known as the \dfn{span}, is the greatest number of basic operations (with constant fan-in) that have to be performed \textit{sequentially} by the algorithm due to data dependencies. Or, in other words, the length of the critical path of the algorithm's circuit (\textit{a.k.a.,} computation graph). The \dfn{work} $W(n)$ of an algorithm is defined as the \textit{total} number of basic operations that the algorithm has to perform.

Since no more that $p$ operations can be performed at any time, an algorithm will require at least $W(n)/p$ rounds of parallel operations. On the other hand, an algorithm requires at least $D(n)$ rounds of parallel operations, by definition. It turns out that the running time $T(n)$ of a parallel algorithm is bounded by the sum of these lower-bounds:
\begin{equation}
T(n) = O\Big(\frac{W(n)}{p} + D(n)\Big).
\end{equation}

There are certain fundamental parallel operations that are commonly used in parallel computation. We describe the ones that we use along with their parallel time complexities.

\begin{itemize}
    \item \texttt{map} applies a given function to each element of the input list in parallel. The time complexity of \texttt{map} is \hl{$O(\frac{n}{p} + 1)$} as each processor can independently process one element of the input in constant time.

    \item \texttt{reduce} combines all elements of the input list using an associative operation in a hierarchical manner. We can perform a binary reduction in \hl{$O(\frac{n}{p} + \log n)$} time. In each step, half of the processors perform the associative operation with the result of their neighbouring processor, reducing the problem size by half.

    \item \texttt{scan} computes a running sum (or any other associative operation) of the input list. An algorithm, known as Hillis-Steele scan, works by having each processor \(i\) compute the sum of the input element at \(i\) and all preceding elements at distances of \(2^j\) for \(j=0\) to \(\log(n) - 1\), thus requiring $\log n$ steps to complete. There exists a work-efficient algorithm for \texttt{scan} that runs in time \hl{$O(\frac{n}{p} + \log n)$}.
\end{itemize}

\section{COMPUTING $\R$}

Proximal gradient descent does not require computing $\R$, nonetheless, it's interesting to analyze the complexity of computing $\R$ and contrast it to that of $\prox_\R$.

At first glance, computing $\R(w)$ seems to require quadratic time in the size of $w$, \ie $O(d^2)$. However, if we sort the weights, $\R$ can be rewritten in a form that is faster to compute. Note that \textit{$\R$ is permutation invariant}. Let $x$ denote the sorted weight vector. Then,
\begin{align*}
\R(w) = \R(x)
      = \frac{1}{d - 1}\sum_i \sum_{j < i} x_i - x_j
      = \frac{1}{d - 1}\sum_i \Big( (i - 1)x_i - \sum_{j < i} x_j \Big).
\end{align*}
Let us define a new vector $s$ containing the prefix sum of $w$, that is, $s_i \defeq \sum_{j < i} x_j$. This vector can be computed in time $O(d)$. Next, $\R(w)$ can be computed with an additional $O(d)$ time. Sorting is, thus, the bottleneck of the algorithm, resulting in an \hl{$O(d \log d)$ time} algorithm for $\R$.

This algorithm can greatly benefit from parallelization. Calculating $\R(x)$ only requires two prefix sums, a vector multiplication and subtraction, and a sum, which take $O(\frac{d}{p} + \log d)$ parallel time in total. Additionally, sorting can be performed in $O(\frac{d \log d}{p} + \log d)$ parallel time, giving us an \hl{$O(\frac{d \log d}{p} + \log d)$ parallel time} algorithm for computing $\R$.

\section{PROOFS}\label{sec:proofs}

\subsection{Proof of \cref{lemma:ws_preservation}}

\begin{proof}
Let $\sigma \in S_d$ be the permutation that only swaps indices $i$ and $j$. When $w_i = w_j$, the set $\pisort(w)$ is the same as the set $\{ \sigma \pi \st \pi \in \pisort(w)\}$. Therefore,
\begin{align*}
    \sum_{\pi \in \pisort(w)} \left( \nabla h_\pi \right)_i = \sum_{\pi \in \pisort(w)} \left( \nabla h_{\sigma \pi} \right)_i = \sum_{\pi \in \pisort(w)} \left( \nabla h_{\pi} \right)_j.
\end{align*}
Dividing by $-\abs{\pisort(w)}$ gives the desired result.
\end{proof}

\subsection{Proof of \cref{lemma:monotonic_inclusion}}

\begin{proof}
    Let $\pi$ be a permutation that sorts the weights. Assuming all weights $w_i \in \Re$ to be distinct, we have
    \begin{equation}\label{eq:w_sort}
        w_{\pi_1} < w_{\pi_2} < ... < w_{\pi_{d-1}} < w_{\pi_d}.
    \end{equation}
    If some of these weights are equal, some $<$ signs should be replaced with $=$.

    The weights $w_i(t)$ change \textit{smoothly} in time according to the ODE, and when two weights meet, they stay equal forever, due to \cref{lemma:ws_preservation}. Therefore, as time progresses, we may only have that, in \cref{eq:w_sort}, some $<$ signs turn to $=$. Clearly, previous sorting permutations are still valid while new ones are added to $\pisort$, hence, $\pisort$ satisfies monotonic inclusion. It immediately follows that $\partial \R$ also satisfies monotonic inclusion by \cref{eq:subdiff_R}.
\end{proof}

\subsection{Proof of \cref{thm:prox_R}}

\begin{proof}
Note that $\dot{w}$ only changes when new weights become equal. Now suppose $w$ follows the negative direction of subgradients $g_1, ..., g_k$ for $t_1, ..., t_k$ time units such that $\sum_{i=1}^k t_i = 1$. Due to monotonic inclusion of the subdifferential of $\R$, we have
\begin{equation*}
    g_1, ..., g_k \in \partial \R(w(1)).
\end{equation*}
The negative displacement of $w$ (\ie $(w(0) - w(1)$) is equal to $\sum_{i=1}^k t_i g_i$, which is a convex combination of subgradients at $w(1)$. Therefore, the negative displacement is in the subdifferential of $w(1)$, which implies that $w(1)$ is the result of the proximal mapping.
\end{proof}

\subsection{Proof of \cref{prop:conservation_of_momentum}}

\begin{proof}
Note that, for all tangent hyperplanes $h$ of $\R$,
\begin{equation}\label{eq:grad_sum}
\sum_{i=1}^d (\nabla h)_i = \frac{1}{d - 1}\sum_{i=1}^d (2i - d - 1) = 0.
\end{equation}
Since any subgradient $g$ of $\R$ is a convex combination of the gradient of tangent hyperplanes, $\sum_i g_i = 0$. As $-\dot{w}$ is a subgradient, the same fact applies.
\end{proof}

\subsection{Proof of \cref{prop:rightmost_collision}}

\begin{proof}
    $\Leftarrow:$ Let us focus on where particle $1$ will end up. Let $y'_j$ denote the final position of particle $j$. Recall that $\avg(y_{1: j})$ represents the final center of mass of particles $1$ to $j$ under the assumption that particles $j$ and $j + 1$ do not collide. If they do collide, the velocity of particle $j$ increases in the negative direction, and so, the final center of mass will only be smaller.

    Since particle $1$ is always the left-most particle, we have $y'_1 \leq \avg(y_{1: j})$ for all $j$. Minimizing over $j$ we get $y'_1 \leq \min_j \avg(y_{1: j})$. On the other hand, $y'_1$ is equal to the the final center of mass of \textit{some} group of neighbouring particles that all collide, therefore,
    \begin{equation}
    y'_1 = \min_j \avg(y_{1: j}),    
    \end{equation}
    and particle $1$ collides with all particles up to $\argminin_j \avg(y_{1: j})$.

    $\Rightarrow:$ Only these particles collide with particle $1$ because otherwise particle $1$ would end up at a different greater position.
\end{proof}

\subsection{Neighbour Collision}\label{sec:neighbour_collision}

The following corollary of \cref{prop:rightmost_collision} provides an if and only if condition for the collision of two \textit{neighbouring} particles.
\newpage
\begin{corollary}[Neighbour collision]
\begin{equation*}
     \text{Particles } i \text{ and } i + 1 \text{ collide}
\iff \max_{j \leq i} \avg(y_{j: i}) \geq \min_{j \geq i + 1} \avg(y_{i+1: j})
\end{equation*}
\end{corollary}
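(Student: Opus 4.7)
The plan is to reduce the statement to \cref{prop:rightmost_collision} and its left--right mirror, applied to the two closed subsystems obtained by splitting the particles at the gap between $i$ and $i+1$. Applying the proposition to the subsystem $\{i+1,\ldots,d\}$ treated as closed, the rightmost particle that $i+1$ collides with is $\argminin_{j \geq i+1} \avg(y_{i+1:j})$, so the cluster containing $i+1$ ends at position $R \defeq \min_{j \geq i+1} \avg(y_{i+1:j})$. A symmetric mirror argument (reflecting positions, velocities, and the sort order) gives the analogous ``leftmost collision'' statement: the cluster containing $i$ in the closed subsystem $\{1,\ldots,i\}$ ends at position $L \defeq \max_{j \leq i} \avg(y_{j:i})$.

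The next ingredient is a coupling principle: as long as particles $i$ and $i+1$ have not yet collided in the full system, the two halves' dynamics are literally those of their independent closed subsystems, because no information can cross the gap. Let $L(t)$ and $R(t)$ denote the positions of particles $i$ and $i+1$ in these independent simulations; both are continuous and piecewise-linear in $t$, with $L(0) < R(0)$, $L(1) = L$, and $R(1) = R$.

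For the direction $L \geq R \Rightarrow$ collision, I would invoke the intermediate value theorem to produce some $t_0 \in (0,1]$ with $L(t_0) = R(t_0)$. At the smallest such $t_0$ the coupling principle still applies, so particles $i$ and $i+1$ coincide at $t_0$ in the full system, i.e., they collide.

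For the converse, suppose particles $i$ and $i+1$ collide at some $t_0 \in (0,1]$ in the full system. The coupling principle gives $L(t_0) = R(t_0)$, and the cluster velocities just before collision must satisfy $V_L > V_R$ (otherwise the gap would not close). The main obstacle is to ensure that $L(t) - R(t)$ does not become negative again on $(t_0,1]$, since in the independent simulations the two clusters continue to evolve after $t_0$. This will follow from a monotonicity lemma on cluster velocities in the closed subsystems: particle $i$'s cluster in the left subsystem can only absorb neighbours from its left, and any such merger requires the incoming entity to be moving faster (otherwise no catch-up), so $V_L$ is non-decreasing in $t$; symmetrically, $V_R$ is non-increasing. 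Therefore $\dot L(t) - \dot R(t) \geq V_L(t_0^+) - V_R(t_0^+) > 0$ throughout $(t_0,1]$, which gives $L(1) \geq R(1)$ and closes the equivalence.
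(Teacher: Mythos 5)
Your proof is correct and follows the same route as the paper's: split the system at the gap between particles $i$ and $i+1$, apply \cref{prop:rightmost_collision} and its left--right mirror to the two closed halves, and compare the resulting destinations $L$ and $R$. The paper simply asserts that a collision occurs iff the destinations cross, whereas your coupling principle, intermediate-value step, and monotonicity of the end-cluster velocities ($V_L$ non-decreasing, $V_R$ non-increasing, since each merger averages in a strictly faster, resp.\ slower, incoming cluster) supply a careful justification of that assertion --- including the less obvious converse direction --- but the decomposition and the key lemma are identical.
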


\begin{proof}
    Considering particles $1$ to $i$ as a closed system then particle $i$ is heading to $\max_{j \leq i} \avg(y_{j: i})$. Similarly, considering particles $i + 1$ to $d$ as a closed system, particle $i + 1$ is heading to $\min_{j \geq i + 1} \avg(y_{i+1: j})$. A collision occurs if and only if the destinations cross each other.
\end{proof}

\subsection{Proof of \cref{thm:search_collision}}

\begin{proof}
\textbf{1.} If $i < i^\star$, we know that particle $i$ does not participate in any collisions, so its rightmost collision is itself.

\textbf{2.} Recall that a chain of collisions starts from $(k, k + 1)$ and extends outwards. By \cref{prop:rightmost_collision}, $\operatorname{rmc}(i)$ gives us the rightmost collision of particle $i$ when assuming it is the first particle. Therefore, we should assume that particles less than $i$ don't exist. Without these particles, the final cluster of collided particles will still include $i$ but will end at some $j'$, where $k + 1 \leq j' \leq j^\star$. The rightmost collision of $i$ is, therefore, $j'$, which satisfies $j' > k$.
\end{proof}

\subsection{Combining Weight-Sharing and Sparsity}\label{sec:prox_R_l1}

\begin{proposition}[Proximal mapping of $\R+\beta \ell_1$]\label{prop:prox_R_plus_l1}
\begin{equation*}
\prox_{\R + \beta \ell_1}(x) = \prox_{\beta \ell_1}(\prox_\R(x))
\end{equation*}
\end{proposition}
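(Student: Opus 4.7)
The plan is to apply the optimality characterization of proximal mappings from equation \eqref{eq:prox_opt}, namely $u = \prox_f(x) \iff x - u \in \partial f(u)$. Let $u \defeq \prox_\R(x)$ and $v \defeq \prox_{\beta \ell_1}(u)$. The goal is to show $v = \prox_{\R + \beta \ell_1}(x)$, which by the optimality condition and the subdifferential sum rule reduces to
\[
    x - v \in \partial \R(v) + \partial (\beta \ell_1)(v).
\]

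First I would decompose $x - v = (x - u) + (u - v)$. By the definitions of $u$ and $v$, we immediately get $x - u \in \partial \R(u)$ and $u - v \in \partial(\beta \ell_1)(v)$. So the only missing step is to promote $x - u$ from a subgradient of $\R$ at $u$ to a subgradient of $\R$ at $v$, i.e.\ to establish the inclusion $\partial \R(u) \subseteq \partial \R(v)$. Once this is in place, adding the two memberships gives the desired statement and the proof is complete.

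To prove $\partial \R(u) \subseteq \partial \R(v)$, I would invoke the characterization of $\partial \R$ from equation \eqref{eq:subdiff_R} as the convex hull of $\{\nabla h_\pi : \pi \in \pisort(\cdot)\}$. Since this description depends on the weight vector only through $\pisort$, it suffices to show $\pisort(u) \subseteq \pisort(v)$. This follows from the fact that soft-thresholding $\prox_{\beta \ell_1}$ is applied coordinate-wise by a common nondecreasing function: for any pair of indices $i, j$ with $u_i \leq u_j$ we have $v_i \leq v_j$, and in particular $u_i = u_j$ forces $v_i = v_j$. Hence every permutation that sorts $u$ also sorts $v$, giving the claimed inclusion.

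The main obstacle is exactly this inclusion of subdifferentials; everything else is algebraic bookkeeping with the optimality conditions. Morally, the argument parallels the monotonic inclusion property of \cref{lemma:monotonic_inclusion}: applying soft-thresholding to $u$ can only refine the weight-sharing pattern (all entries with $|u_i| < \beta$ collapse to $0$) and never break an existing tie, so the set of sorting permutations can only grow. This is the compatibility between $\R$ and $\beta \ell_1$ that enables the decomposition $\prox_{\R + \beta \ell_1} = \prox_{\beta \ell_1} \circ \prox_\R$ without any appeal to \citet{yu2013decomposing}.
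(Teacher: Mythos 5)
Your proposal is correct and follows essentially the same route as the paper's own proof: the same decomposition $x - v = (x-u) + (u-v)$, the same reduction to the inclusion $\partial \R(u) \subseteq \partial \R(v)$, and the same justification that soft-thresholding preserves the ordering and the ties of the coordinates so that $\pisort(u) \subseteq \pisort(v)$. Your write-up is in fact slightly more explicit than the paper's on why the sorting permutations carry over, but there is no substantive difference in the argument.
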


\begin{proof}
    We start with several definitions that will aid the readability of the proof.

    \begin{figure}[!h]
    \centering
    \begin{tikzpicture}
        \node at (0, 0) (x) {$x$};
        \node at (0,-1.5) (x_prime) {$x'$};
        \node at (0,-3) (x_double_prime) {$x''$};
        \draw[->] (x) -- node[right] {$\Delta_\R' \defeq x - x' \in \partial \R(x')$} (x_prime);
        \draw[->] (x_prime) -- node[right] {$\Delta_{\beta \ell_1}'' \defeq x' - x'' \in \partial(\beta \ell_1)(x'')$} (x_double_prime);
        \node[right=0.2cm] at (x) {};
        \node[right=0.2cm] at (x_prime) {$\defeq \prox_\R(x)$};
        \node[right=0.2cm] at (x_double_prime) {$\defeq \prox_{\beta \ell_1}(\prox_\R(x))$};
        \end{tikzpicture}
    \end{figure}

    We must show that $x - x'' \in \partial (\R + \beta \ell_1)(x'')$, or equivalently, $\Delta_\R' + \Delta_{\beta \ell_1}'' \in \partial \R(x'') + \partial (\beta \ell_1)(x'')$. This is true if $\Delta_\R' \in \partial \R(x'')$. Since $x''$ is the result of soft-thresholding $x'$, and since soft-thresholding preserves order and weight-sharing, similar to \cref{lemma:monotonic_inclusion}, we get monotonic inclusion for $\partial \R$, \ie $\forall x \in \Re^d: \partial \R(x) \subseteq \partial \R(\prox_{\beta \ell_1}(x))$. Hence, $\Delta_\R' \in \partial \R(x') \subseteq \partial \R(x'')$, which completes the proof.
\end{proof}

\subsection{Counterexample for \citet{kearsley1996approach} }\label{app:counterexample}
\citet{kearsley1996approach} propose a parallel algorithm for isotonic regression which we show to be \textit{incorrect} with a simple counterexample. Let
\begin{equation*}
y = [0.7, 1, 0.9, 0.99]    
\end{equation*}
in the isotonic regression problem of \cref{eq:iso_regr}. Their algorithm proposes to average values $1, 0.9, 0.99$ and arrives at the solution
\begin{equation*}
x = [0.7, 0.96\bar{3}, 0.96\bar{3}, 0.96\bar{3}],    
\end{equation*}
while the correct solution only averages $1$ and $0.9$ to get
\begin{equation*}
x^\star = [0.7, 0.95, 0.95, 0.99].
\end{equation*}

\section{PSEUDOCODES}\label{sec:extra_algorithms}

The functions \textsc{RightmostCollision} and \textsc{PerformCollisions} are defined in \cref{alg:search_collisions}. We observed that \cref{alg:imminent_collisions_v2} is the fastest in practice for training neural networks with weight-sharing regularization. It performs an additional \texttt{prefix\_sum} to detect successive imminent collisions, then performs them all at once in a single round. The \texttt{prefix\_sum} adds a factor of $O(\log d)$ to the depth, however, since we've empirically observed that there aren't too many rounds in practice, the overhead is negligible.

\noindent 
\begin{minipage}{.48\linewidth}

\begin{algorithm}[H]
\caption{Imminent collisions algorithm (v2)}\label{alg:imminent_collisions_v2}
\begin{algorithmic}
\Function{CollisionsRound}{$x, v, m$}
    \State $n \gets \operatorname{size}(x)$
    \State $c \gets \operatorname{zeros\_array}(n)$
    \State $c[1:] \gets (x + v)[:-1] < (x + v)[1:]$
    \State $i \gets \operatorname{prefix\_sum}(c)$
    \State $x \gets \operatorname{scatter}(x \odot m, i)$ 
    \State $v \gets \operatorname{scatter}(v \odot m, i)$
    \State $m \gets \operatorname{scatter}(m, i)$
    \State $x \gets x / m$
    \State $v \gets v / m$
    \State \Return $x, v, m$
\EndFunction
\vspace{0.5em}
\Function{ImminentCollisions}{$x, v, m$}
    \Repeat
        \State $d \gets \operatorname{size}(x)$
        \State $x, v, m \gets$ \Call{CollisionsRound}{$x, v, m$}
    \Until{$d = \operatorname{size}(x)$}
    \State \Return $x, v, m$
\EndFunction
\end{algorithmic}
\end{algorithm}

\end{minipage}
\begin{minipage}{.515\linewidth}
    
\begin{algorithm}[H]
\caption{Imminent collisions algorithm (v1)}\label{alg:imminent_collisions_v1}
\begin{algorithmic}
\Function{CollisionsRound}{$x, v, m$}
    \State $d \gets \operatorname{size}(x)$
    \ParForAll{odd indices $i < d$}
        \If{$(x + v)[i] > (x + v)[i + 1]$} 
            \State \Call{PerformCollisions}{$x, v, m, i, i + 1$}
        \EndIf
    \EndFor 
    \State \Return $x[m > 0], v[m > 0], m[m > 0]$ 
\EndFunction
\vspace{2.9em}
\Function{ImminentCollisions}{$x, v, m$}
    \Repeat
        \State $d \gets \operatorname{size}(x)$
        
        \State $x, v, m \gets$ \Call{CollisionsRound}{$x, v, m$} 
        \State $x, v, m \gets$ \Call{CollisionsRound}{$x[2:], v[2:], m[2:]$} 
    \Until{$d = \operatorname{size}(x)$}
    \State \Return $x, v, m$
\EndFunction
\end{algorithmic}
\end{algorithm}

\end{minipage}

\begin{algorithm}[!h]
\caption{End collisions algorithm}\label{alg:end_collisions}
\begin{algorithmic}
\Function{EndCollisions}{$x, v, m$}
    \State $d \gets \operatorname{size}(x)$
    \State $i \gets 1$
    \While{$i < d$}
        \State $j \gets$ \Call{RightmostCollision}{$x, v, m, i$}
        \State \Call{PerformCollisions}{$x, v, m, i, j$}
        \State $i \gets j + 1$
    \EndWhile
\EndFunction
\end{algorithmic}
\end{algorithm}

\section{EXPERIMENTS}\label{sec:exp_details}
\subsection{MNIST on Torus}

The CNN baseline is designed to be invariant to circular 2D translations. The architecture consists of two circular convolutional layers, outputting 32 and 64 channels, respectively, followed by global average pooling and a fully connected layer. Note that circular convolution is equivariant to circular translations, while global pooling is invariant, resulting in an invariant model.

The fully connected networks were obtained from the CNN baseline by replacing convolutional layers with linear layers with matching input and output sizes. In order to see convolution-like behaviour in the first two layers of the MLPs, we apply weight-sharing regularization to each of these layers separately.

We trained all the networks for 200 epochs using an initial learning-rate of 0.1 which is cosine annealed to 0. We also used a momentum of 0.9 to achieve near-zero training error for all models within 200 epochs. As a result, all the models attained more than 99\% training accuracy, except for our method when using dataset ratios of 0.8 and 1.0, where it achieved more than 98\% training accuracy. To choose the weight-sharing parameter $\alpha$ for each layer, we ran a sweep on values 0.01, 0.001, 0.0001, and 0.00001. The selected parameters were 0.001 for the first layer and 0.0001 for the second layer. For the $\ell_1$ coefficient $\beta$, we selected 0.0001 after a hyperparameter search. We don't use rewinding in these experiments.

Additionally, we use a modification to the standard $\ell_1$ proximal update rule as proposed by \citet{neyshabur2020towards}, which is to multiply the learning-rate in $v$ in line 11 of \cref{alg:prox}, instead of multiplying it in the coefficients $\alpha$ and $\beta$. This modification improves validation accuracy in practice.

\begin{figure}[!h]
    \centering
    \includegraphics[width=0.47\linewidth]{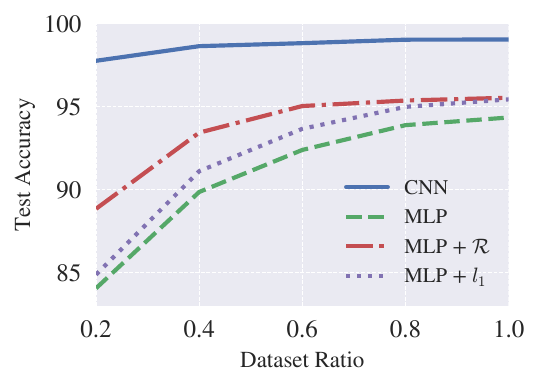}
    \caption{Test accuracy vs. dataset ratio on MNIST on torus for various models.}
    \label{fig:test_accuracy_vs_p}
\end{figure}

In \cref{fig:test_accuracy_vs_p}, we investigate the effect of dataset size on test accuracy. We observe that the introduction of weight-sharing regularization serves as a beneficial inductive bias for the network, effectively narrowing the generalization gap, particularly when using a reduced dataset ratio.

\begin{figure}[!h]
    \centering
    \includegraphics[width=0.38\linewidth]{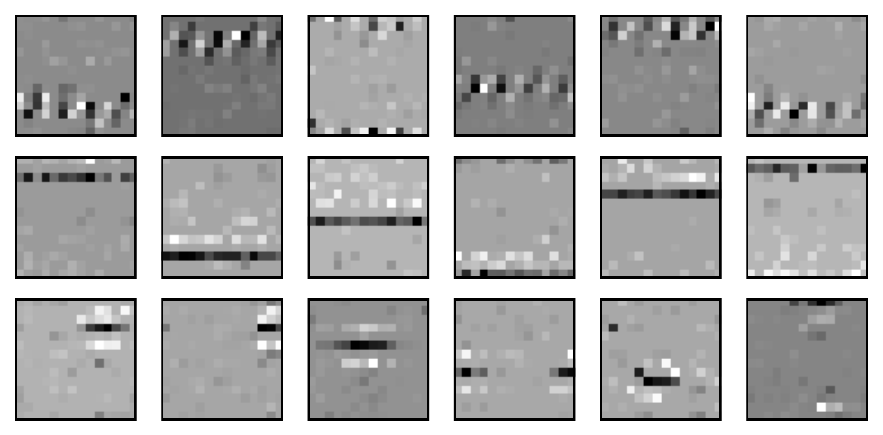}
    \caption{The emergence of learned convolution-like filters in a fully connected network with weight-sharing regularization in MNIST on torus.}
    \label{fig:mnist_weight}
\end{figure}

\begin{figure}[!h]
    \centering
    \includegraphics[width=0.4\linewidth]{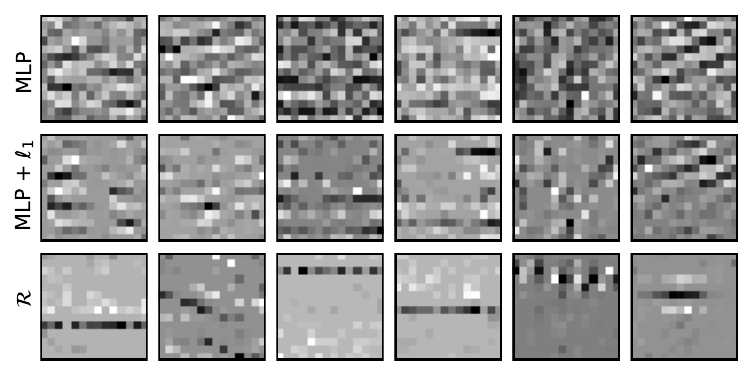}
    \caption{A random sample of learned filters from the 256 filters with the highest number of non-zero weights in MNIST on torus.}
    \label{fig:mnist_weight_random}
\end{figure}

We visualize the learned filters in \cref{fig:mnist_weight,fig:mnist_weight_random}.
We are able to find filters that look like translations of one another, similar to what one finds in a convolutional layer. These are illustrated in \cref{fig:mnist_weight}. We manually grouped these filters by examining the top 256 filters with the highest number of non-zero weights. A random selection of these filters for each method is shown in \cref{fig:mnist_weight_random}. The same process was carried out for the CIFAR10 task in \cref{fig:learned_conv_selected,fig:learned_conv_random}.

\subsection{CIFAR10}
This experiment is primarily designed based on the experiments in ~\citet{neyshabur2020towards}. We train a shallow convolutional neural network with one convolutional layer followed by two fully connected layers. The number of output channels of the convolutional layer is 64. The fully connected network corresponding to this shallow convolutional network has approximately 75M parameters. Batch normalization is applied after each of the first two layers. A learning-rate of 0.1 and no momentum is used for all experiments, along with a cosine annealed learning-rate schedule, similar to ~\citet{neyshabur2020towards}.

For the $\beta$-LASSO baseline we use their reported optimal $\beta=50$ and a corresponding $\rho$ of 0.98 for our own methods. We conducted a sweep for the regularization coefficient of $\beta$-LASSO (referred to as $\lambda$) which resulted in 0.00001 being selected.

For our methods, a hyperparameter search resulted in the following values: for MLP + $\mathcal{R}$, $\alpha = 0.001$; for MLP + $\ell_1$, $\beta = 0.001$; and for MLP + $\mathcal{R}$ + $\ell_1$, $\alpha = 0.0002$ and $\beta = 0.001$.

\subsection{Runtime Comparison}\label{app:run_time_compare}

\begin{table}[!h]
    \centering
    \caption{Approximate runtime of each algorithm per epoch on CIFAR10 experiments.}
    \label{table:runtime}
    \begin{tabular}{l c} 
    \toprule
    \textbf{Algorithm} & \textbf{Time (sec)}\\
    \midrule
    Sequential PAV & 1800\\
    Search Collisions & 200\\
    Imminent Collisions (v2) & 50\\
    No Regularization & 10\\
    \bottomrule
    \end{tabular}
    \vspace{0.5em}
\end{table}

\subsection{Worst-case Runtime Comparison}

We compare the actual runtime of our implementations of \ica and \sca for worst-case inputs on a NVIDIA V100 GPU. \cref{fig:worst-case} shows the results. As expected, \sca is exponentially faster when there are enough processors available.

\begin{figure}[!h]
    \centering
    \includegraphics[width=0.35\linewidth]{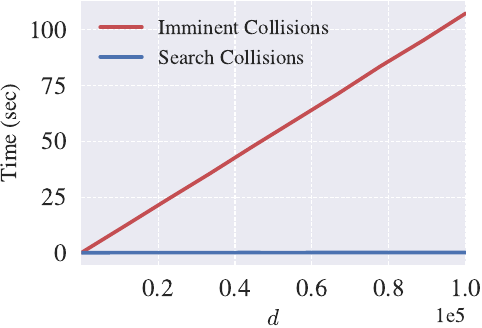}
    \includegraphics[width=0.35\linewidth]{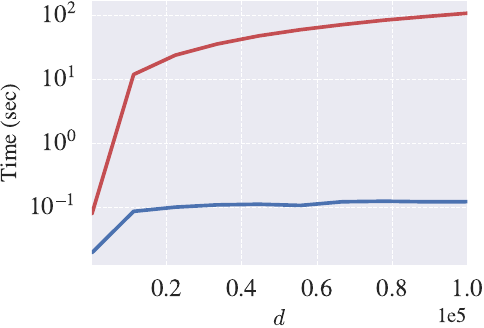}
    \caption{Worst-case runtimes of \sca vs. \ica. \textbf{(left)} linear scale \textbf{(right)} log scale.}
    \label{fig:worst-case}
\end{figure}

\end{document}


%

%

\onecolumn
\aistatstitle{Instructions for Paper Submissions to AISTATS 2024: \\
Supplementary Materials}

\section{FORMATTING INSTRUCTIONS}

To prepare a supplementary pdf file, we ask the authors to use \texttt{aistats2024.sty} as a style file and to follow the same formatting instructions as in the main paper.
The only difference is that the supplementary material must be in a \emph{single-column} format.
You can use \texttt{supplement.tex} in our starter pack as a starting point, or append the supplementary content to the main paper and split the final PDF into two separate files.

Note that reviewers are under no obligation to examine your supplementary material.

\section{MISSING PROOFS}

The supplementary materials may contain detailed proofs of the results that are missing in the main paper.

\subsection{Proof of Lemma 3}

\textit{In this section, we present the detailed proof of Lemma 3 and then [ ... ]}

\section{ADDITIONAL EXPERIMENTS}

If you have additional experimental results, you may include them in the supplementary materials.

\subsection{The Effect of Regularization Parameter}

\textit{Our algorithm depends on the regularization parameter $\lambda$. Figure 1 below illustrates the effect of this parameter on the performance of our algorithm. As we can see, [ ... ]}

\vfill